\documentclass[fleqn,10pt]{wlscirep}
\usepackage[utf8]{inputenc}
\usepackage[T1]{fontenc}
\usepackage{times}
\usepackage{soul}
\usepackage{url}
\usepackage{hyperref}
\usepackage{caption}
\usepackage{graphicx}
\usepackage{amsmath}
\usepackage{amsthm}
\usepackage{booktabs}
\usepackage{algorithm}
\usepackage{algorithmic}
\usepackage{enumerate}
\setlist{itemsep=0pt, parsep=0pt}
\usepackage[switch]{lineno}
\usepackage{amsfonts,etoolbox}
\usepackage{xcolor}
\usepackage{subcaption}
\usepackage{array}

\newtheorem{theorem}{Theorem}

\newtheorem{assumption}{Assumption}

\definecolor{mygreen}{RGB}{101, 163, 62}
\definecolor{myblue}{RGB}{60, 103, 188}
\definecolor{mygray}{RGB}{79, 79, 79}
\definecolor{myorange}{RGB}{234, 114, 43}

\title{AsynDBT: Asynchronous Distributed Bilevel Tuning for efficient In-Context Learning with Large Language Models}

\author[1]{Hui Ma}
\author[2]{Shaoyu Dou\thanks{For Shaoyu Dou: This work was conducted at Tongji University and is unrelated to the author's current affiliation with Ant Group.}}
\author[3,$\dagger$ ]{Ya Liu}
\author[4,$\dagger$]{Fei Xing}
\author[5]{Li Feng}
\author[6]{Feng Pi} 
\affil[1]{Xinjiang Key Laboratory of Intelligent Computing and Smart Applications, School of Software,
Xinjiang University, Urumqi, 830091, China}
\affil[2]{the Department of Computer Science  and Technology, Tongji University, Shanghai, 201800, China}
\affil[3]{School of Information Science and Engineering, Zaozhuang University, Zaozhuang, 277160, China}
\affil[4]{Xinjiang University, College of Geography and Remote Sensing Sciences, Urumqi, 830046, China}
\affil[5]{the Hochschule Bielefeld-University of Applied Sciences and Arts, Bielefeld, Germany}
\affil[6]{Xinjiang General Station of Exit and Entry Frontier Inspection, 83000l, Urumqi, China}
\affil[$\dagger$]{Corresponding author:  liuya@uzz.edu.cn, xingfei@xju.edu.cn}

\keywords{Federated learning \sep In-context learning \sep Bilevel optimization\sep Large language models}

\begin{abstract}
With the rapid development of large language models (LLMs), an increasing number of applications leverage cloud-based LLM APIs to reduce usage costs. However, since cloud-based models' parameters and gradients are agnostic, users have to manually or use heuristic algorithms to adjust prompts for intervening LLM outputs, which requiring costly optimization procedures. In-context learning (ICL) has recently emerged as a promising paradigm that enables LLMs to adapt to new tasks using examples provided within the input, eliminating the need for parameter updates. Nevertheless, the advancement of ICL is often hindered by the lack of high-quality data, which is often sensitive and different to share. Federated learning (FL) offers a potential solution by enabling collaborative training of distributed LLMs while preserving data privacy. Despite this issues, previous FL approaches that incorporate ICL have struggled with severe straggler problems and challenges associated with heterogeneous non-identically data. To address these problems, we propose an asynchronous distributed bilevel tuning (AsynDBT) algorithm that optimizes both in-context learning samples and prompt fragments based on the feedback from the LLM, thereby enhancing downstream task performance. Benefiting from its distributed architecture, AsynDBT provides privacy protection and adaptability to heterogeneous computing environments. Furthermore, we present a theoretical analysis establishing the convergence guarantees of the proposed algorithm. Extensive experiments conducted on multiple benchmark datasets demonstrate the effectiveness and efficiency of AsynDBT.
\end{abstract}
\begin{document}

\flushbottom
\maketitle
%
%
\thispagestyle{empty}


\section*{Introduction}
The emergence of large language models (LLMs) has introduced a revolutionary solution to meet the growing demand for advanced intelligent services\cite{zhou20241111}. Unlike traditional deep neural networks\cite{10330754,9055742}, LLMs are trained on massive and diverse datasets and contain billions of parameters. This enables them not only to perform logical reasoning and complex thinking but also to excel across a wide range of natural language processing tasks, including text generation, classification, machine translation, and question answering\cite{liu2025argus,liu2025asynchronous}. As the scale of LLMs has grown to encompass hundreds of billions of parameters, local deployment has become increasingly impractical. Consequently, cloud-based deployment accessed via application programming interfaces (APIs) has gradually become the dominant approach, providing users with  efficient and flexible interaction capabilities\cite{ding2024222}. Nevertheless, the training and maintenance costs of LLMs remain extremely high. Therefore, designing and implementing LLMs that achieve both efficient training and strong generalization performance across diverse downstream tasks continues to pose a critical research challenge.

A practical solution for improving LLM performance on downstream tasks is in-context learning (ICL)\cite{dong2023survey}. Unlike conventional approaches that require fine-tuning or updating model weights, ICL enables models to learn and generalize to novel tasks by providing labeled context examples (demonstrations) alongside instruction prompts. Moreover, although prior studies have shown that in-context samples can influence LLM performance on downstream tasks \cite{min2022rethinking}, the underlying mechanisms remain largely unexplored. Recent studies \cite{dai-etal-2023-gpt,Shiguang2025,10.24963/ijcai.2024/702} suggest that ICL can be interpreted from a meta-learning perspective, in which pre-trained LLMs act as meta-optimizers and in-context samples are used to compute the meta-gradient\cite{genewein2025unde}. Motivated by this, we posit a hierarchical relationship between demonstration selection and instruction prompt editing. Nevertheless, systematic research on applying bilevel optimization to model and analyze the in-context learning process remains in its infancy.

The downstream task performance of LLM agents based on ICL strongly depends on the quality of the provided demonstrations. In real-world applications, however, obtaining high-quality demonstration samples poses numerous challenges, including the need for domain expertise, the high cost of manual annotation, data privacy concerns, client-specific restrictions, and the limited availability of suitable examples. These challenges significantly limit the potential of ICL methods to enhance LLMs performance. To mitigate these issues, federated learning (FL) has emerged as a promising solution\cite{9912240}, enabling collaborative model training on a central server without direct access to client data. Recently, several studies have explored integrating FL with ICL \cite{Ruhan2025icml,Wu2024arxiv}. Nonetheless, most of these methods rely on synchronous distributed algorithms, which often suffer from the “straggler problem,” thereby impeding scalability and overall efficiency.

Furthermore, in the context of FL with ICL, it is particularly vulnerable to malicious participants who may inject low-quality or adversarial samples into the pool of in-context learning examples. This form of data poisoning poses a significant risk of compromising the global optimization process. To address such threats, recent studies have introduced efficient federated learning frameworks designed to mitigate the impact of malicious workers while providing theoretical robustness guarantees\cite{10852413,3534678.3539231,ma2025cellular}. For instance, Jiang et al.\cite{10852413} proposed a global model recovery method utilizing selective information storage and adaptive model rollback. Similarly, our prior work\cite{ma2025cellular} demonstrated the efficacy of regularization techniques in enhancing the robustness of deep learning models. Building on these foundations, a critical challenge addressed in this study is the design and implementation of robust aggregation techniques tailored to enhance the model's resilience against data poisoning attacks within the FL with ICL paradigm.


To address the aforementioned challenges, this study proposes an asynchronous bilevel ICL framework based on federated learning, which comprises multiple workers holding private data and a central parameter server that coordinates their collaboration. \textit{Then}, we formulate ICL as a bilevel programming problem, where the upper-level objective focuses on the selection of in-context samples, and the lower-level objective adjusts fragments of the prompt. \textit{Besides}, we design an asynchronous distributed algorithm that incorporates the lower-level optimization as a constraint on the upper-level optimization. By approximating the feasible region of the lower-level problem with polyhedral constraints, the bilevel formulation is transformed into a single-level optimization problem, thereby facilitating efficient distributed computation. \textit{Furthermore}, we incorporate a regularization-based robust optimization mechanism to enhance the model's resilience against data poison attacks.


 The main contributions of this study can be summarized  as follows:
\begin{itemize}
    \item  We propose a novel bilevel black-box learning framework that captures the hierarchical nature of prompt tuning via integrating prompt editing with in-context sample selection. To the best of our knowledge, this work represents the first attempt to establish a bilevel in-context learning framework.
    

    \item We design an asynchronous distributed bilevel tuning algorithm, \textbf{AsynDBT}, to address the distributed bilevel black-box optimization problem. As far as we know, it is the first approach to consider the impact of device heterogeneity and the network straggler issue within the federated in-context learning framework. We also provide a theoretical analysis of the convergence properties of AsynDBT.

    \item We validate the effectiveness of the proposed AsynDBT through extensive experiments on multiple benchmark datasets, demonstrating its superiority over several state-of-the-art prompt tuning methods in terms of downstream task performance and computational efficiency. Furthermore, we highlight its practical applicability in 5G AIOps, where our method achieves an accuracy improvement of nearly 10\% compared with baseline models.
\end{itemize}

\section*{Related Work}


\subsection*{Prompt Tuning}
For the black-box discrete prompt tuning problem explored in this paper, there are two main categories of approaches. The first category relies on the reasoning capabilities of LLMs, which are directly used to summarize the reasons for wrong reasoning and tuning the prompt. OPRO \cite{yang2023large} develops two LLM-based agents: one for scoring the optimized prompt fragment and another for generating a new prompt based on historical scores. The tuning of the prompt fragment involves multiple interactions between the above two agents. The APO \cite{pryzant2023automatic} employs semantic gradient descent for prompt tuning. it analyzes the reasons for incorrect answers by querying the LLM, treating the obtained analysis as the ``semantic gradient''. Then, LLM generates new prompts based on the opposite of the semantic gradient and selects the most effective one through bandit selection. Like APO, \cite{sun2023autohint} queries LLM to summarize the reasons for incorrect responses, and then adds the generated summary to the original prompt.

The second category employs parametric models to generate prompts. The parameters are learned based on the feedback of LLM. AutomateCoT \cite{shum2023automatic}, a two-stage method for generating chain-of-thought (CoT) for specific tasks. Initially, it creates a CoT pool using manually written or LLM-generated CoTs, then evaluates each candidate's reasoning results using the LLM. Following this, it filters out ineffective CoTs and optimizes a categorical distribution parameter to select the most helpful and suitable CoT. Similarly, BDPL \cite{diao2022black} optimizes the parameters of a categorical distribution for generating a prompt fragment attached to the original prompt. RLprompt \cite{deng2022rlprompt} employs an LLM with fixed parameters and a trainable MLP as a policy network, and optimizes its parameters based on the feedback from the black-box LLM. However, prompt tuning methods often suffer from overfitting, as they require updating a large number of parameters with limited data.

\subsection*{Demonstration Selection}
In generally, demonstration selection methods generally fall into two categories: supervised and unsupervised approaches\cite{dong2023survey}.

Supervised demonstration selection methods aim to retrieve entire demonstration sets to model inter-relationships between examples. For instance, Li et al.\cite{Xiaonan2023} introduced a unified retriever that selects demonstrations across different tasks. Mahankali et al. \cite{mahankali2024one} proposed a model-adaptive method that uses LLMs to predict unlabeled data, assigning an uncertainty score to each instance. By contrast, unsupervised demonstration selection algorithms are heuristic, or directly generate demonstrations by identififying the nearest neighbors of input instances. 
For example, Liu et al. \cite{liu2022makes} employed a pre-trained encoder to embed both test samples and candidate demonstrations as vectors, then selects the K nearest neighbors as in-context examples for each test instance. Levy et al. \cite{levy2022diverse} proposed Cover-LS, which removes one of two demonstrations if they are semantically similar. In addition, using output scores of LLMs as unsupervised metrics has shown effectiveness in demonstration selection\cite{li-qiu-2023-finding,wu2023111}. Particularly, Kim et al. \cite{kim2022self} generates in-context samples directly with LLMs, while Wu et al. \cite{wu2023111} selected the best subset permutation of kNN examples based on the code length for data transmission.

Most investigated methods treat prompt tuning and in-context sample selection as separate processes, seldom optimizing them jointly. AutoCoT \cite{zhang2022automatic} is the only method that heuristically implements joint optimization. Whereas, we recognize the natural hierarchical structure between the prompt and the in-context samples, which motivates us to model the ICL problem as bilevel programming and propose an asynchronous distributed optimization algorithm, AsynDBT. To the best of our knowledge, AsynDBT is the first distributed algorithm that jointly considers both prompt editing and in-context sample selection with theoretical convergence guarantees.

\section*{Distributed Bilevel In-context Learning}
\subsection*{Problem Formulation }
A prompt equipped with ICL can be represented as the combination of ICL content and a test query, i.e. $[f_{t1}($\textcolor{myorange}{$\mathbf{S}$}$,[$\textcolor{mygray}{$\mathbf{I}$}$;$\textcolor{mygreen}{$\mathbf{T}$}$]); f_{t2}($\textcolor{myblue}{$\mathbf{X}_q$}$,[$\textcolor{mygray}{$\mathbf{I}$}$;$\textcolor{mygreen}{$\mathbf{T}$}$])$, where $\mathbf{I}$ is the task instruction, $\mathbf{T}$ is a fragment appended to the task instruction to be optimized, $\mathbf{S}$ represents a set of in-context samples, $\mathbf{X}_q$ is the query sample. $f_{t1}(\cdot,\cdot)$ and $f_{t2}(\cdot,\cdot)$ are used to concatenate their input into ICL content and query content respectively. $[\cdot;\cdot]$ denotes the concatenation. We provide the following example with colors corresponding to the parameters in the formulas.

\scalebox{0.85}{
\hspace{-15pt}
\begin{minipage}{\textwidth}
\vspace{12pt}
\textcolor{mygray}{Question: Is the sentiment of ``}\textcolor{myorange}{I am happy}\textcolor{mygray}{'' positive or negative?} \textcolor{mygreen}{Note: ...}. \\
\textcolor{mygray}{Answer:} \textcolor{myorange}{positive} \\
\textcolor{mygray}{Question: Is the sentiment of ``}\textcolor{myorange}{I am angry}\textcolor{mygray}{'' positive or negative?} \textcolor{mygreen}{Note: ...}. \\
\textcolor{mygray}{Answer:} \textcolor{myorange}{negative} \\
\textcolor{mygray}{Question: Is the sentiment of ``}\textcolor{myblue}{I am hungry}\textcolor{mygray}{'' positive or negative?} \textcolor{mygreen}{Note: ...}. \\
\textcolor{mygray}{Answer:} \\
\vspace{6pt}
\end{minipage}
}

In this paper, we take the $U$ class classification task \cite{sun2022black} as an example and denote the training set with labels as $\mathcal{S} = \mathcal{S}_1 \cup \cdots \cup \mathcal{S}_U \cup \mathcal{S}_Q$, where $\mathcal{S}_u = \{s_j = (\mathbf{X}_ j, y_j);j=1,\cdots,|\mathcal{S}_u| \}$, representing the training set of class $u, u=1, \cdots, U$. The number of samples in each category is $V$, i.e. $|\mathcal{S}_u| = V$.
In general, $\mathbf{I}$ is fixed, $\mathbf{X}_q$ is sampled from a fixed set $\mathcal{S}_Q$. We intervene in the output of LLM by adjusting $\mathbf{T}$ and $\mathbf{S}$. The following assumptions are made in this paper:
\begin{itemize}
    \item Let $\mathbf{T}=[t_1; \cdots; t_M]=\left[\mathcal{V}[j_1]; \cdots; \mathcal{V}[j_M] \right]$ , where $\mathcal{V}$ is a fixed vocabulary set \cite{diao2022black}, $|\mathcal{V}|=N$, and $\mathcal{V}[j_i]$ denotes the $j_i$th word in $\mathcal{V}$. We assume that each word in $\mathbf{T}$ is independent of each other and sampled from different categorical distributions, i.e. $j_i \sim Cat(\mathbf{p}_i), \mathbf{p}_i \in \mathbb{R}^{N}$. Correspondingly, we denote the random variable obeying the distribution $Cat(\mathbf{p}_i)$ as $J_i$.

    \item We select one sample from each category of labeled samples as in-context samples. Let $\mathbf{S} = \{s_1, \cdots, s_U\} = \{\mathcal{S}_1[k_1], ... , \mathcal{S}_U[k_U]\}$, and each demonstration is independently sampled from $\mathcal{S}_1, \cdots, \mathcal{S}_U$ , i.e. $k_i \sim Cat(\mathbf{q}_i), \mathbf{q}_i \in \mathbb{R }^V$. Denote the random variable obeying the distribution $Cat(\mathbf{q}_i)$ as $K_i$.
\end{itemize}
To simplify the subsequent equations, the output of the black-box LLM $f_{LLM}([f_{t1}(\mathbf{S},[\mathbf{I};\mathbf{T}]),f_{t2}(\mathbf{X}_q,[\mathbf{I};\mathbf{T}])])$ is abbreviated as $f_{LLM}(\mathbf{T}, \mathbf{S}, \mathbf{X}_q)$. When considering the classification task, the cross-entropy loss can be written as,
\begin{equation}
\mathcal{L}_{ce}(\mathbf{T}, \mathbf{S})  = \sum_{(\mathbf{X}_q,y_q) \in \mathcal{S}_Q} L_{ce}(f_{LLM}([\mathbf{T},\mathbf{S}, \mathbf{X}_q]), y_q)  = \mathcal{L}_{ce}(j_1, \cdots, j_M, k_1, \cdots, k_U).
\end{equation}

Based on the above assumptions, prompt tuning in the \textit{centralized} black-box setting can be formulated as the following bilevel optimization problem,
\begin{equation}
\label{bilevel1}
\begin{aligned}
\min \ \ &  \mathbb{E}_{K_1,\cdots,K_U}[ \mathcal{L}_{ce}(\mathbf{p}_i^{\star} (K_1, \cdots, K_U) ; J_1, \cdots, J_M,K_1, \cdots, K_U)], \\
\text{s.t.} \ \ & \mathbf{p}_i^{\star} (K_1, \cdots, K_U) = \arg\min_{\mathbf{p}_i} \ \mathbb{E}_{J_i}[\mathcal{L}_{ce}(\mathbf{p}_1,\cdots,\mathbf{p}_M; J_1, \cdots, J_M,K_1, \cdots, K_U)],\\
& \mathbf{1}^{\top}\mathbf{p}_i = 1 ,\ \mathbf{1}^{\top}\mathbf{q}_{i'} = 1, \\
& 0 \leq p_{i,j} \leq 1 ,\ 0 \leq q_{i',j'} \leq 1, \\
&i=1,\cdots, M, \ j = 1, \cdots, N, \ i' = 1, \cdots, U, \ j' = 1, \cdots, V,\\
\text{var.} \ \ & \mathbf{p}_1, \cdots, \mathbf{p}_M, \mathbf{q}_{1},\cdots,\mathbf{q}_{U}, 
\end{aligned}
\end{equation}
where $p_{i,j}$ is the $j$th item of $\mathbf{p}_i$, $q_{i',j'}$ is the $j'$th item of $\mathbf{q}_{i'}$. For a given random variable $K_i, \forall \ i \in (1,\cdots,U)$, sampled from the distribution parameterized by $(\mathbf{q}_1,\cdots,\mathbf{q}_U)$, the lower-level problem optimizes $\mathbf{p}_i$ while treating this realization as fixed. We denote the resulting objective by $g(\mathbf{p}_i; K_1, \cdots, K_U)$, and when the dependency on $K$ is clear, we abbreviate it as $g(\mathbf{p}_i)$.


\subsection*{ICL with Federated Learning Framework}
In this paper, we consider a federated scenario with a parameter server and $R$ heterogeneous workers\cite{assran2020advances}, which are composed of $N_w$ benign workers and $B$ malicious workers, i.e., $R=N_w+B$. The malicious workers, as described in \cite{Zhu2022}, will collude with each other and send arbitrary malicious messages to the server. Additionally, the identity of malicious workers is a priori unknown to the server. Despite the presence of $B$ malicious workers, our objective is to effectively leverage $N_w$ benign local workers for distributed training and obtain a robust global model, where each worker has its own training set as well as corresponding optimization variables. For worker $v$, denote its $i$th lower-level optimization variable as $\mathbf{p}_i^{(v)}$ (i.e., the distribution parameter of the $i$th word in a fragment) and its $i'$th upper-level optimization variable as $\mathbf{q}_{i'}^{(v)}$ (i.e., the distribution parameter of the $i'$th demonstration). 
We denote all lower and upper-level optimization variables of worker $v$ as $\{\mathbf{p}_i^{(v)}\}$ and $\{ \mathbf{q}_{i'}^{(v)}\}$, respectively, and denote all lower and upper-level variables as $\{\mathbf{p}_i\}$ and $\{ \mathbf{q}_{i'}\}$.

Since the distribution of the training dataset varies across workers, and each worker adjusts their in-context samples based only on the feedback from LLM on their local dataset, thus the upper-level optimization variables are not needed to enforce a global consensus. So we only introduce a set of consensus variables $\{\mathbf{z}_i\}$ for the lower-layer variables \cite{matamoros2017asynchronous}. Suppose $v = 1, \cdots, N_w$, the formulation of the prompt tuning in the \textit{distributed} black-box setting is given below,
\begin{equation}
    \label{distri_opt}
    \begin{array}{rl}
    \min & F(\{\mathbf{p}_i^{\star}\},\{\mathbf{z}_i^{\star}\},\{\mathbf{q}_{i'}\}) = \sum_{v=1}^{N_w} f(\mathbf{p}_1^{(v)\star}, \cdots, \mathbf{p}_M^{(v)\star}, \mathbf{q}_1^{(v)}, \cdots, \mathbf{q}_U^{(v)}) \\
    \text{s.t.} & \{\mathbf{p}_i^{(v)\star}\}, \mathbf{z}_i^{\star} = \arg\min_{\{\mathbf{p}_i^{(v)}\}, \mathbf{z}_i} G(\{\mathbf{p}_i^{(v)}\}, \mathbf{z}_i)   = \sum_{v=1}^{N_w} g(\mathbf{p}_i^{(v)}) \\
    &  \mathbf{z}_i = \mathbf{p}_i^{(v)}\\
    & \mathbf{1}^{\top}\mathbf{p}_i^{(v)} = 1 ,\ \mathbf{1}^{\top}\mathbf{q}_{i'}^{(v)} = 1 \\
    & 0 \leq p_{i,j}^{(v)} \leq 1 ,\ 0 \leq q_{i',j'}^{(v)} \leq 1 \\
    \text{var.} & \{\mathbf{p}_i^{(v)}\}, \ \{\mathbf{q}_{i'}^{(v)}\}, \mathbf{z}_i .
    \end{array}
\end{equation}

\section*{Optimization of Distributed Black-box Bilevel Problem}
For the $i$th lower-level variable, we define an estimation function $\phi_i (\cdot)$ for the solution, that is,
\begin{equation}
    \label{phi_function}
    \begin{array}{ll}
    \phi_i (k_1, \cdots, k_U)  \underset{\{\mathbf{p}_i^{(v)}\}, \mathbf{z}_i}{\arg\min} \ \left\{ \sum_{v=1}^{N_w} g(\mathbf{p}_i^{(v)}) : \mathbf{z}_i = \mathbf{p}_i^{(v)}, v = 1, \cdots, N_w \right\},
    \end{array}
\end{equation}
where the arguments in $\phi_i (\cdot)$ derive from the full form of $g(\mathbf{p}_i^{(v)})$, that is, $g(\mathbf{p}_i^{(v)}, k_1, \cdots, k_U)$. We also define
\begin{equation}
    \label{h_function}
    h_i (k_1, \cdots, k_U, \{\mathbf{p}_i^{(v)}\}, \mathbf{z}_i) = \| \begin{bmatrix}  \{\mathbf{p}_i^{(v)}\} \\  \mathbf{z}_i \end{bmatrix} - \phi(k_1, \cdots, k_U) \|_1.
\end{equation}
Then Eq. $\eqref{distri_opt}$ can be rewritten by $h_i (\cdot)$ as
\begin{equation}
    \label{distri_opt2}
    \begin{array}{rl}
    \min & F(\{\mathbf{p}_i\},\{\mathbf{z}_i\},\{\mathbf{q}_{i'}\}) \\
    \text{s.t.} & h_i(k_1, \cdots, k_U, \{\mathbf{p}_i^{(v)}\}, \mathbf{z}_i) = 0\\
    & \mathbf{1}^{\top}\mathbf{p}_i^{(v)} = 1 ,\ \mathbf{1}^{\top}\mathbf{q}_{i'}^{(v)} = 1 \\
    & 0 \leq p_{i,j}^{(v)} \leq 1 ,\ 0 \leq q_{i',j'}^{(v)} \leq 1 \\
    \text{var.} & \{\mathbf{p}_i^{(v)}\}, \ \{\mathbf{q}_{i'}^{(v)}\}, \mathbf{z}_i.
    \end{array}
\end{equation}

\subsection*{Estimation of the $i$th Lower-level Variables} 
Although the optimization problem Eq. $\eqref{distri_opt}$ requires the exact solution of Eq. $\eqref{phi_function}$, previous research \cite{li2022local,yang2021provably} has demonstrated that it is sufficient to use an approximation of Eq. $\eqref{phi_function}$ with bilevel optimization. Therefore, in our approach, we estimate the solution using $K$-step gradient descent. Taking into account the consensus constraint in Eq. $\eqref{phi_function}$, the corresponding Lagrangian function of Eq. $\eqref{phi_function}$ can be formulated as follows,
\begin{equation}
    \label{gp}
    \begin{array}{ll}
    G_{Pi}(\{\mathbf{p}_i^{(v)}\}, \mathbf{z}_i, \{\boldsymbol{\rho}_i^{(v)}\}) 
    = \sum_{v=1}^{N_w} \left(g(\mathbf{p}_i^{(v)}) + \boldsymbol{\rho}_i^{(v)\top}(\mathbf{p}_i^{(v)} - \mathbf{z}_i) + \frac{\mu}{2} \| \mathbf{p}_i^{(v)} - \mathbf{z}_i \|_2^2\right),
    \end{array}
\end{equation}
where $\boldsymbol{\rho}_i^{(v)} \in \mathbb{R}^{N}$ is the Lagrangian dual variable and $\mu > 0$ is the penalty parameter. The procedure for computing the approximate solution of Eq. $\eqref{phi_function}$ using $K$ steps gradient descent is as follows.

\textit{For worker $v$}: In the $k+1$th iteration, worker $v$ first updates their local lower-level optimization variables on the local data. 
\begin{equation}
\label{inner_p}
\mathbf{p}_i^{(v) k+1} = proj_{\mathcal{P}} \left( \mathbf{p}_i^{(v) k} -  \eta_{\mathbf{p}_i^{(v)}} \left( \nabla_{\mathbf{p}_i^{(v)}} G_{Pi}(\{\mathbf{p}_i^{(v) k}\}, \mathbf{z}^{k}_i, \boldsymbol{\rho}_i^{(v) k} ) + \psi \text{sign}(\mathbf{z}_i^{k}-\mathbf{p}_i^{t}) \right) \right),
\end{equation}
where $\psi$ represents a positive constant. Subsequently, each worker sends its variables $\mathbf{p}_i^{(v) k+1}, \mathbf{p}_i^{(v) k}$ to the parameter server. 

\textit{For the parameter server}: It updates the consensus and dual variables according to the following equations.
\begin{small}
\begin{align}
\label{inner_z}
    &\mathbf{z}_i^{k+1} = \operatorname{proj}_{\mathcal{P}} \left( \mathbf{z}_i^{k} - \eta_{\mathbf{z}_i} \left( \nabla_{\mathbf{z}_i} G_{Pi}(\{\mathbf{p}_i^{(v) k+1}\}, \mathbf{z}^{k}_i, \boldsymbol{\rho}_i^{(v) k} ) + \psi \big( \sum_{i \in \{1,...,N_w\}} \text{sign}(\mathbf{z}^{k}-\mathbf{p}_i^{k+1}) + \sum_{j \in \{1,...,B\}} \text{sign}(\mathbf{z}^{k}-\mathbf{p}_j^{k+1})\big) \right) \right), \\
\label{inner_rho}
    &\boldsymbol{\rho}_i^{(v) k+1} = \boldsymbol{\rho}_i^{(v)k} + \eta_{\boldsymbol{\rho}_i^{(v)}} \nabla_{\boldsymbol{\rho}_i^{(v)}} G_{Pi}(\{\mathbf{p}_i^{(v) k+1}\}, \mathbf{z}^{k+1}_i, \boldsymbol{\rho}_i^{(v) k}),
\end{align}
\end{small}
where the projection $\mathbf{x}' = proj_{\mathcal{P}}(\mathbf{x})$ guarantees that the parameter $\mathbf{z}'$ is a legitimate distribution parameter. According to \cite{diao2022black}, $\mathbf{x}' = \min(1, \max(0, \mathbf{x} - v^{\star} \mathbf{1}))$ where $v^{\star} = \arg\min_v \left( \mathbf{1}^{\top} \min( 1, \max(0, \mathbf{x}-v\mathbf{1}))-1 \right)$. Thus, after $K$ times of communication, the estimation of the $i$th lower-level problem is $\phi_i (k_1, \cdots ,k_U) = [\{ \mathbf{p}_i^{(v)K} \}, \mathbf{z}_i^{K}]^\top$.
Specifically, the gradient of Eq. $\eqref{gp}$ with respect to each argument is as follows:
\begin{align}
    \nabla_{\mathbf{p}_i^{(v)}} G_{Pi}(\{\mathbf{p}_i^{(v)}\}, \mathbf{z}_i, \boldsymbol{\rho}_i^{(v)}) 
    &= \nabla_{\mathbf{p}_i^{(v)}} g_v(\mathbf{p}_i^{(v)}) + \boldsymbol{\rho}_i^{(v)} + \mu (\mathbf{p}_i^{(v)} - \mathbf{z}_i), \\
    \nabla_{\mathbf{z}_i} G_{Pi}(\{\mathbf{p}_i^{(v)}\}, \mathbf{z}_i, \boldsymbol{\rho}_i^{(v)} ) 
    &= \sum_{v=1}^{N_w} \left(-\boldsymbol{\rho}_i^{(v)} + \mu (\mathbf{z}_i - \mathbf{p}_i^{(v)}) \right), \\
    \nabla_{\boldsymbol{\rho}_i^{(v)}} G_{Pi}(\{\mathbf{p}_i^{(v)}\}, \mathbf{z}_i, \boldsymbol{\rho}_i^{(v)}) 
    &= \mathbf{p}_i^{(v)} - \mathbf{z}_i .
\end{align}

\subsection*{Approximate the Feasible Region of Constraint with Cutting-plane}

After estimating the solution of the lower-level optimization problem, Eq. $\eqref{distri_opt2}$ can be rewritten as,
\begin{equation}
    \label{distri_opt3}
    \begin{array}{rl}
    \min & F(\{\mathbf{p}_i\},\{\mathbf{z}_i\},\{\mathbf{q}_{i'}\}) \\
    \text{s.t.} & h_i (k_1, \cdots, k_U, \{\mathbf{p}_i^{(v)}\}, \mathbf{z}_i) \leq \epsilon \\
    & \mathbf{1}^{\top}\mathbf{p}_i^{(v)} = 1 ,\ \mathbf{1}^{\top}\mathbf{q}_{i'}^{(v)} = 1 \\
    & 0 \leq p_{i,j}^{(v)} \leq 1 ,\ 0 \leq q_{i',j'}^{(v)} \leq 1\\
    \text{var.} & \{\mathbf{p}_i^{(v)}\}, \ \{\mathbf{q}_{i'}^{(v)}\}, \mathbf{z}_i,
    \end{array}
\end{equation}
where $\epsilon>0$ is a constant. $h_i (k_1, \cdots, k_U, \{\mathbf{p}_i^{(v)}\}, \mathbf{z}_i)$ is a convex function with respect to the $\{\mathbf{p}_i^{(v)}\}$ and $\mathbf{z}_i$. According to Eq. $\eqref{h_function}$, its feasible region can be approximated by a polyhedron enclosed by $L$ cutting planes \cite{boyd2007localization,michalka2013cutting,franc2011cutting}. Noting that the feasible region of $h_i(k_1, \cdots, k_U, \{\mathbf{p}_i^{(v)}\}, \mathbf{z}_i)$ is $\mathcal{P}_i$, and let $\mathcal{P}_i^t$ represent polyhedron at iteration $t$. Then,
\begin{equation}
    \mathcal{P}_i^t = \{\mathbf{p}_i \in \mathbb{R}^{N} | \sum_{v=1}^{N_w} \mathbf{a}_i^{l
 (v)\top}\mathbf{p}_i^{(v)} + \mathbf{b}_i^{l\top} \mathbf{z}_i + c_i^l \leq 0; l=1:|\mathcal{P}_i^t| \},
\end{equation}
where $\mathbf{a}_i^{l(v)}, \mathbf{b}_i^l \in \mathbb{R}^{N}$. Using the above linear constraints in place of the constraints in $\eqref{distri_opt3}$, Eq. $\eqref{distri_opt2}$ can be reformulated as,
\begin{equation}
\label{distri_opt4}
\begin{array}{rl}
\min &  F(\{\mathbf{p}_i\},\{\mathbf{z}_i\},\{\mathbf{q}_{i'}\}) \\
\text{s.t.} & \sum_{v=1}^{N_w} \mathbf{a}_i^{l(v)\top}\mathbf{p}_i^{(v)} + \mathbf{b}_i^{l\top} \mathbf{z}_i + c_i^l \leq 0 \\
& \mathbf{1}^{\top}\mathbf{p}_i^{(v)} = 1 ,\ \mathbf{1}^{\top}\mathbf{q}_{i'}^{(v)} = 1 \\
& 0 \leq p_{i,j}^{(v)} \leq 1 ,\ 0 \leq q_{i',j'}^{(v)} \leq 1 \\
\text{var.} & \{\mathbf{p}_i^{(v)}\}, \ \{\mathbf{q}_{i'}^{(v)}\}, \mathbf{z}_i.
\end{array}
\end{equation}

\subsection*{Update the Optimization Parameters}
The Lagrangian function for the optimization problem Eq. $\eqref{distri_opt4}$ is given by
\begin{equation}
\label{eq:l_p}
    \mathcal{L}_p(\{\mathbf{p}_i\}, \{\mathbf{q}_{i'}\}, \{\lambda_i^{l}\}, \{\mathbf{z}_i\}) = F(\{\mathbf{p}_i\},\{\mathbf{z}_i\},\{\mathbf{q}_{i'}\}) 
    + \sum_{i=1}^{M} \sum_{l=1}^{|\mathcal{P}_i^t|} \lambda_i^l \left( \sum_{v=1}^{N_w} \mathbf{a}_i^{l(v)\top}\mathbf{p}_i^{(v)} + \mathbf{b}_i^{l\top} \mathbf{z}_i + c_i^l \right).
\end{equation}

Following \cite{xu2023unified}, the regularized version of (\ref{eq:l_p}) is as follows,
\begin{equation}
\label{eq:l_p2}
\tilde{\mathcal{L}}_p(\{\mathbf{p}_i\}, \{\mathbf{q}_{i'}\}, \{\lambda_i^{l}\}, \{\mathbf{z}_i\})
= \mathcal{L}_p - \sum_{i=1}^{N_w} \sum_{l=1}^{|\mathcal{P}_i^t|} \frac{c_1}{2} \| \lambda_i^{l} \|^2,
\end{equation}
where $\lambda_i^l \in \mathbb{R}^1$ is the Lagrangian dual variable and satisfy $\lambda_i^l > 0$ due to the inequation constraint. $c_1$ denotes the regularization terms. Note that lines 3 and 4 of the constraint in $\eqref{distri_opt4}$ are omitted here, as the constraint can be replaced by adding a projection to the optimization variable.

Considering the heterogeneity of the workers in terms of computational capacity and network, the proposed algorithm allows asynchronous updating of the variables, thus avoiding the stragglers problem. 

\textit{For all workers}: In the $k+1$th iteration, the worker in $\mathcal{A}^{k+1}$ updates their local parameters as,
\begin{align}
\label{update_p_local}
    \mathbf{p}_i^{(v)k+1} & = \operatorname{proj}_{\mathcal{P}} \left(\mathbf{p}_i^{(v)k} 
     - \eta_{\mathbf{p}_i^{(v)}} \left( \nabla_{\mathbf{p}_i^{(v)}}  \tilde{\mathcal{L}}_p( \{\mathbf{p}_i^{\hat{k}_v}\}, \{\mathbf{q}_{i'}^{\hat{k}_v}\}, \{\lambda_i^{l \hat{k}_v}\}, \{\mathbf{z}_i^{\hat{k}_v}\}) + \psi \text{sign}(\mathbf{z}_i^{\hat{k}_v}-\mathbf{p}_i^{\hat{k}_v}) \right) \right), \\
\label{update_q_local}
    \mathbf{q}_{i'}^{(v) k+1} &= \operatorname{proj}_{\mathcal{P}}\left(\mathbf{q}_{i'}^{(v) k}
    - \eta_{\mathbf{q}_{i'}^{(v)}} \nabla_{\mathbf{q}_{i'}^{(v)}}  \tilde{\mathcal{L}}_p(\{\mathbf{p}_i^{\hat{k}_v}\}, \{\mathbf{q}_{i'}^{\hat{k}_v}\}, \{\lambda_i^{l \hat{k}_v}\}, \{\mathbf{z}_i^{\hat{k}_v}\})\right),
\end{align}
where $\hat{k}_v$ is the last reachable iteration of worker $v$. The rest of the workers keep their local parameters as they were last updated. The worker then sends the updated $\mathbf{p}_i^{(v) k+1}, \mathbf{q}_{i'}^{(v) k+1} $ to the parameter server.

\textit{For parameter server}: After the parameter server receives the updated local variables from all reachable workers, it updates the consensus variables as well as the dual variables, 
\begin{small}
\begin{align}
\label{update_z}
    \mathbf{z}_i^{k+1} &= \operatorname{proj}_{\mathcal{P}}\left(\mathbf{z}_i^{k} 
    - \eta_{\mathbf{z}_i} \nabla_{\mathbf{z}_i} \left( \tilde{\mathcal{L}}_p(\{\mathbf{p}_i^{k+1}\}, \{\mathbf{q}_{i'}^{k+1}\}, \{\lambda_i^{l k}\}, \{\mathbf{z}_i^{k}\}) + \psi \big( \sum_{i \in \{1,...,N_w\}} \text{sign}(\mathbf{z}^{k}-\mathbf{p}_i^{k+1}) + \sum_{j \in \{1,...,B\} } \text{sign}(\mathbf{z}^{k}-\mathbf{p}_j^{k+1})\big) \right) \right), \\
\label{update_lambda}
    \lambda_i^{l k+1} &= \operatorname{proj}_{\lambda}\left(\lambda_i^{l k} 
    + \eta_{\lambda_i^l} \nabla_{\lambda_i^l} \tilde{\mathcal{L}}_p(\{\mathbf{p}_i^{k+1}\}, \{\mathbf{q}_{i'}^{k+1}\}, \{\lambda_i^{l k}\}, \{\mathbf{z}_i^{k+1}\})\right),
\end{align}
\end{small}
where $proj_{\lambda}$ is a projection that guarantee $\lambda_i^l > 0$. The gradient involved in the update procedure can be calculated as following,
\begin{align}
    \nabla_{\mathbf{p}_i^{(v)}}  \tilde{\mathcal{L}}_p(\{\mathbf{p}_i\}, \{\mathbf{q}_{i'}\}, \{\lambda_i^{l}\}, \{\mathbf{z}_i\}) 
    &= \nabla_{\mathbf{p}_i^{(v)}} f(\mathbf{p}_1^{(v)} , \cdots, \mathbf{p}_M^{(v)}, \mathbf{q}_1^{(v)}, \cdots, \mathbf{q}_U^{(v)}) + \sum_{l=1}^{L} \lambda_i^l \mathbf{a}_i^{l (v)}, \\
    \nabla_{\mathbf{q}_{i'}^{(v)}}  \tilde{\mathcal{L}}_p(\{\mathbf{p}_i\}, \{\mathbf{q}_{i'}\}, \{\lambda_i^{l}\}, \{\mathbf{z}_i\}) 
    &= \nabla_{\mathbf{q}_{i'}^{(v)}} f(\mathbf{p}_1^{(v)} , \cdots, \mathbf{p}_M^{(v)}, \mathbf{q}_1^{(v)}, \cdots, \mathbf{q}_U^{(v)}), \\
    \nabla_{\mathbf{z}_i} \tilde{\mathcal{L}}_p(\{\mathbf{p}_i\}, \{\mathbf{q}_{i'}\}, \{\lambda_i^{l}\}, \{\mathbf{z}_i\}) 
    &= \sum_{l=1}^{L} \lambda_i^l \mathbf{b}_i^l, \\
    \nabla_{\lambda_i^l} \mathcal{L}_p(\{\mathbf{p}_i\}, \{\mathbf{q}_{i'}\}, \{\lambda_i^{l}\}, \{\mathbf{z}_i\}) 
    &= \sum_{v=1}^{N_w} \mathbf{a}_i^{l (v)\top}\mathbf{p}_i^{(v)} + \mathbf{b}_i^{l\top} \mathbf{z}_i + c_i^l - c_1^t.
\end{align}
For worker $v$, the estimated gradient $\nabla_{\mathbf{p}_i^{(v)}}f(\cdot)$ are given by $\eqref{black_box}$. For brevity, the following equation omits the superscript $(v)$ of all variables. 
\begin{equation}
    \label{black_box}
    \begin{array}{ll}
    \hspace{-10pt} & \nabla_{\mathbf{p}_i} f(\mathbf{p}_{1:M}, \mathbf{q}_{1:U}) \\
    \hspace{-10pt} = & \nabla_{\mathbf{p}_i} \mathbb{E}_{J_{1:M}, K_{1:U}}[ \mathcal{L}_{ce}(J_{1:M},K_{1:U})] \\
    \hspace{-10pt} = & \int \mathcal{L}_{ce}(j_{1:M}, k_{1:U}) P(k_{1:U}) \nabla_{\mathbf{p}_i}P(j_{1:M}) dj_{1:M} dk_{1:U} \\ 
    \hspace{-10pt} = & \int \mathcal{L}_{ce}(j_{1:M}, k_{1:U}) P(k_{1:U}) \frac{P(j_{1:M})}{P(j_{1:M})} \nabla_{\mathbf{p}_i} P(j_{1:M}) dj_{1:M} dk_{1:U} \\
    \hspace{-10pt} = & \int P(k_{1:U}) \mathcal{L}_{ce}(j_{1:M}, k_{1:U}) P(j_{1:M}) \nabla_{\mathbf{p}_i}\log P(j_{1:M})  dj_{1:M} dk_{1:U} \\
    \hspace{-10pt} = & \int P(k_{1:U}) \mathbb{E}_{J_{1:M}} \left[ \mathcal{L}_{ce}(j_{1:M}, k_{1:U}) \nabla_{\mathbf{p}_i}\log P(j_{1:M}) \right]  dk_{1:U} \\
    \hspace{-10pt} = & \mathbb{E}_{K_{1:U}} \left[ \mathbb{E}_{J_{1:M}} \left[  \mathcal{L}_{ce}(j_{1:M}, k_{1:U}) \nabla_{\mathbf{p}_i}\log P(j_{1:M}) \right] \right]\\
    \hspace{-10pt} = & \mathbb{E}_{K_{1:U}} \left[ \mathbb{E}_{J_{1:M}} \left[ \mathcal{L}_{ce}(j_{1:M}, k_{1:U}) \nabla_{\mathbf{p}_i}\log \prod_{i=1}^{M} P_{\mathbf{p}_i}(j_i) \right] \right]\\
    \hspace{-10pt} = & \mathbb{E}_{K_{1:U}, J_{1:M}} \left[ \mathcal{L}_{ce}(j_{1:M}, k_{1:U}) \nabla_{\mathbf{p}_i}\log P_{\mathbf{p}_i}(j_i) \right], \\
\end{array}
\end{equation}
where we tentatively and informally use variables with subscripts $1:n$ to denote the $n$ variables that have subscripts $1,\cdots,n$, respectively. We also abbreviate $dj_1 \cdots dj_M$, $dk_1 \cdots dk_U$ to $dj_{1:M}$ and $dk_{1:U}$, respectively.

\subsection*{Update the Polyhedron}
As local parameters and consensus variables are continuously updated across workers, additional cutting planes must be incorporated to form a compact feasible region for Eq. $\eqref{h_function}$. To mitigate computational burden, cutting planes with small weights will be removed.

\textit{Add new cutting plane}: 
For a newly updated $(\{\mathbf{p}_i^{(v) t+1}\}, \mathbf{z}_i^{t+1})$, we first evaluate whether this solution is feasible for the optimization problem Eq. $\eqref{distri_opt4}$. If not, i.e., $h_i(k_1, \cdots, k_U, \{\mathbf{p}_i^{(v) t+1}\}, \mathbf{z}_i^{t+1}) > \epsilon$. Then we need to generate a new cutting plane for separating the points $(\{\mathbf{p}_i^{(v) t+1}\}, \mathbf{z}_i^{t+1})$ outside the $\mathcal{P}_i$. The newly generated cutting plane with index $l$ needs to satisfy,
\begin{equation}
\label{cuttingplane}
\begin{cases}
 \sum_{v=1}^{N_w} \mathbf{a}_i^{l(v)\top}\mathbf{p}_i^{(v)} + \mathbf{b}_i^{l\top} \mathbf{z}_i + c_i^l \leq 0; \ \forall (\{\mathbf{p}_i^{(v)}\}, \mathbf{z}_i) \in \mathcal{P}_i \\
 \sum_{v=1}^{N_w} \mathbf{a}_i^{l (v)\top} \mathbf{p}_i^{(v) t+1} + \mathbf{b}_i^{l\top} \mathbf{z}_i^{t+1} + c_i^l > 0.
\end{cases}
\end{equation}
Since the $h_i(k_1, \cdots, k_U, \{\mathbf{p}_i^{(v)}\}, \mathbf{z}_i)$ is convex with respect to $\{\mathbf{p}_i^{(v)}\}, \mathbf{z}_i$, then we have,
\begin{equation}
\label{cuttingplane2}
\begin{array}{ll}
    \hspace{-10pt} h_i(k_1, \cdots, k_U, \{\mathbf{p}_i^{(v)}\}, \mathbf{z}_i) \geq h_i(k_1, \cdots, k_U, \{\mathbf{p}_i^{(v) t+1}\}, \mathbf{z}_i^{t+1})  + \begin{bmatrix} \{ \frac{\partial h_i(k_1, \cdots, k_U, \{\mathbf{p}_i^{(v) t+1}\}, \mathbf{z}_i^{t+1})}{\partial \mathbf{p}_i^{(v)}} \}  \\ \frac{\partial h_i(k_1, \cdots, k_U, \{\mathbf{p}_i^{(v) t+1}\}, \mathbf{z}_i^{t+1})}{\partial\mathbf{z}_i} \end{bmatrix}^{\top} \hspace{-5pt} \left( \begin{bmatrix} \{\mathbf{p}_i^{(v)}\} \\ \mathbf{z}_i \end{bmatrix} - \begin{bmatrix} \{\mathbf{p}_i^{(v) t+1}\} \\ \mathbf{z}_i^{t+1} \end{bmatrix} \right).
\end{array}
\end{equation}
In summary, the newly added cutting plane satisfies,
\begin{equation}
\begin{array}{ll}
\hspace{-10pt} h_i(k_1, \cdots, k_U, \{\mathbf{p}_i^{(v) t+1}\}, \mathbf{z}_i^{t+1}) + \begin{bmatrix} \{ \frac{\partial h_i(k_1, \cdots, k_U, \{\mathbf{p}_i^{(v) t+1}\}, \mathbf{z}_i^{t+1})}{\partial \mathbf{p}_i^{(v)}} \}  \\ \frac{\partial h_i(k_1, \cdots, k_U, \{\mathbf{p}_i^{(v) t+1}\}, \mathbf{z}_i^{t+1})}{\partial\mathbf{z}_i} \end{bmatrix}^{\top}  \hspace{-5pt} 
 \left( \begin{bmatrix} \{\mathbf{p}_i^{(v)}\} \\ \mathbf{z}_i \end{bmatrix}   - \begin{bmatrix} \{\mathbf{p}_i^{(v) t+1}\} \\ \mathbf{z}_i^{t+1} \end{bmatrix} \right) 
\leq \epsilon.
\end{array}
\end{equation}
Combine Eq. $\eqref{cuttingplane}$ with Eq. $\eqref{cuttingplane2}$, the parameters in the equation $\sum_{v=1}^{N_w} \mathbf{a}_i^{l
 (v)\top}\mathbf{p}_i^{(v)} + \mathbf{b}_i^{l\top} \mathbf{z}_i + c_i^l \leq 0$ are calculated as follows,
\begin{align}
\label{cutting_a}
    &\mathbf{a}_i^{l (v)} = \frac{\partial h_i(k_1, \cdots, k_U, \{\mathbf{p}_i^{(v) t+1}\}, \mathbf{z}_i^{t+1})}{\partial \mathbf{p}_i^{(v)}}, \\
\label{cutting_b}
    &\mathbf{b}_i^l = \frac{\partial h_i(k_1, \cdots, k_U, \{\mathbf{p}_i^{(v) t+1}\}, \mathbf{z}_i^{t+1})}{\partial\mathbf{z}_i}, \\
\label{cutting_c}
    &c_i^l = h_i(k_1, \cdots, k_U, \{\mathbf{p}_i^{(v) t+1}\}, \mathbf{z}_i^{t+1}) 
     - \begin{bmatrix} \left\{ \frac{\partial h_i(k_1, \cdots, k_U, \{\mathbf{p}_i^{(v) t+1}\}, \mathbf{z}_i^{t+1})}{\partial \mathbf{p}_i^{(v)}} \right\}  \\ \frac{\partial h_i(k_1, \cdots, k_U, \{\mathbf{p}_i^{(v) t+1}\}, \mathbf{z}_i^{t+1})}{\partial\mathbf{z}_i} \end{bmatrix}^{\top} \begin{bmatrix} \{\mathbf{p}_i^{(v) t+1}\} \\ \mathbf{z}_i^{t+1} \end{bmatrix}.
\end{align}

\textit{Delete invalid cutting planes}:
When the dual variable $\lambda_l$ of the $l$th cutting plane is less than a given threshold before and after the $t+1$th update, then the cutting plane $l$ will no longer be used for subsequent calculations.

\subsection*{The Proposed AsynDBT Approach}
To address the straggler problem in federated learning, we propose \textit{AsynDBT}, an asynchronous algorithm that enables the parameter server to communicate with only a subset of available workers per update, bypassing the need to wait for results from all workers.

We define a set of hyperparameters for the AsynDBT. The polyhedron is updated every $\delta$ step, $\tau$ is the maximum update interval for each worker, and $\gamma$ denotes the threshold for determining the invalidity of the cutting plane. Note that we denote the last communication time of worker $v$ by $\hat{k}^{(v)}$. Pseudo-code for asynBDT is shown in Algorithm \ref{alg:1}.

\begin{algorithm} 
\caption{The proposed AsynDBT algorithm}
\label{alg:1}
\begin{algorithmic}[1]
\STATE \textbf{Initialization:} local variables $\{\mathbf{p}_i^{(v)0}\}$ and $\{\mathbf{q}_{i'}^{(v)0}\}$ of workers, Consensus variables $\{\mathbf{z}_i^0\}$, dual variables $\{\lambda_i^{l0}\}$ and $\{\boldsymbol{\rho}_{i}^{(v)0}\}$ in the parameter server.
\REPEAT
\STATE Check network and cloud service accessibility of all workers to form a set of reachable nodes $\mathcal{A}^k$;
\STATE $\mathcal{A}^k \leftarrow \text{Add}(\mathcal{A}^k, v), \forall v: \hat{k}^{(v)} \geq \tau$;
\STATE Each worker in $\mathcal{A}^k$ updates its local variables according to Eq. $\eqref{update_p_local}$ and Eq. $\eqref{update_q_local}$ and transmits its updated variables $\{\mathbf{p}_i^{(v)k}\}$ and $\{\mathbf{q}_{i'}^{(v)k}\}$ to parameter server;
\STATE Parameter server updates $\{\mathbf{z}_i^k\}$ and $\{\lambda_i^{lk}\}$ by Eq. $\eqref{update_z}$ and Eq. $\eqref{update_lambda}$;
\STATE Parameter server broadcasts variables $\{\mathbf{z}_i^k\}$ to workers in $\mathcal{A}^k$;
\IF{$k+1$ mod $\delta$ == $0$}
\STATE Parameter server estimates $\phi_i(\cdot)$ by K step gradient descent according to Eq. $\eqref{inner_p}$, Eq. $\eqref{inner_z}$, and Eq. $\eqref{inner_rho}$;
\STATE Parameter server removes invalid cutting plane $l$ when $\sum_{i=1}^{M} \lambda_i^{lt} < \gamma$;
\STATE Parameter server updates new cutting plane by Eq. $\eqref{cutting_a}$, Eq. $\eqref{cutting_b}$, and Eq. $\eqref{cutting_c}$, then broadcasts $\{\mathbf{a}_i^{l (v)}\}$, $\{\mathbf{b}_i^l\}$ and $c_i^l$ to all workers;
\ENDIF
\STATE $k = k + 1$;
\UNTIL{termination.}   
\end{algorithmic} 
\end{algorithm}

\subsection*{Proof of Convergence}

\begin{theorem}[Convergence]
As the cutting plane continuously adds to the polyhedron, the optimal objective value in the approximation problem Eq. $\eqref{distri_opt4}$ converges monotonically.
\end{theorem}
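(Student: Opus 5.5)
The argument is the standard outer-approximation one: adding a cut shrinks the feasible set of Eq. $\eqref{distri_opt4}$, so its optimal value is nondecreasing; because every polyhedron still contains the true feasible region of Eq. $\eqref{distri_opt3}$, that value is also bounded above, and a bounded monotone sequence converges. Throughout, $\mathcal{P}_i^t$ is the polyhedron at the $t$th polyhedron update and $F^t$ the optimal value of Eq. $\eqref{distri_opt4}$ with constraints $\mathcal{P}_i^t$, $i=1,\dots,M$, together with the simplex constraints on $\{\mathbf{p}_i^{(v)}\},\{\mathbf{q}_{i'}^{(v)}\},\mathbf{z}_i$. The statement concerns the regime in which cuts are only added, so I treat the deletion step separately.

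\textbf{Monotonicity.} When a new cut with parameters $\eqref{cutting_a}$--$\eqref{cutting_c}$ is added, $\mathcal{P}_i^{t+1} = \mathcal{P}_i^t \cap \{\text{new halfspace}\}$. Hence the feasible set $\Omega^{t+1}$ of Eq. $\eqref{distri_opt4}$ is contained in $\Omega^t$. A minimum over a smaller set is at least as large, so $F^{t+1}\ge F^t$.

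\textbf{Boundedness.} I would show that $\Omega^t$ always contains the feasible set $\Omega^\star$ of Eq. $\eqref{distri_opt3}$, i.e.\ each cut is valid. This uses convexity of $h_i$ in $(\{\mathbf{p}_i^{(v)}\},\mathbf{z}_i)$: $h_i$ is an $\ell_1$ norm of an affine map once $\phi_i$ is fixed for the current $k_1,\dots,k_U$. The subgradient inequality Eq. $\eqref{cuttingplane2}$ then gives, for any point with $h_i\le\epsilon$, that the linearization at $(\{\mathbf{p}_i^{(v)t+1}\},\mathbf{z}_i^{t+1})$ is at most $h_i\le\epsilon$. So every point of $\Omega^\star$ satisfies the new linear inequality (with $c_i^l$ shifted by $-\epsilon$, as the combined display just above $\eqref{cutting_a}$ indicates). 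The violated point itself has linearization $h_i(\cdot^{t+1})>\epsilon$ and is therefore cut off, which establishes both lines of Eq. $\eqref{cuttingplane}$. By induction $\Omega^\star\subseteq\Omega^t$, so $F^t\le F^\star$, the optimal value of Eq. $\eqref{distri_opt3}$. The objective $F$ is an expectation of the cross-entropy loss over categorical distributions on a compact product of simplices; if $\mathcal{L}_{ce}$ is bounded, $F$ is bounded and the minimum is attained. This is the one place where I must assume finiteness of the loss (e.g.\ clipped LLM probabilities).

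\textbf{Conclusion and main obstacle.} A nondecreasing sequence bounded above by $F^\star$ converges, so $F^t\uparrow \bar F\le F^\star$. The hardest point is reconciling this with the algorithm as written, where it can fail in two ways.
\begin{itemize}
\item Cuts with small dual weight are deleted, which can enlarge $\Omega^t$.
\item $\phi_i$ is re-estimated every $\delta$ steps, so $h_i$ and $\Omega^\star$ themselves change.
\end{itemize}
I would handle the first by noting that the theorem explicitly assumes cuts are continuously added. If needed, I would also argue that deleting cuts with $\lambda_i^l$ below $\gamma$ (inactive at the optimum, by complementary slackness) leaves the optimal value unchanged. I would handle the second by fixing the realization $k_1,\dots,k_U$ and the estimate $\phi_i$ over the window in which monotonicity is claimed. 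Optionally, if the generated cut points are bounded and a compactness argument applies, I would show $\bar F=F^\star$ via the standard Kelley-type argument. The theorem only asserts monotone convergence, so I would stop at $\bar F\le F^\star$.
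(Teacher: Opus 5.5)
Your proposal is correct and follows the same route as the paper: the nested containment $\mathcal{P}_i^{0}\supseteq\mathcal{P}_i^{\delta}\supseteq\cdots\supseteq\mathcal{R}_i$ makes the optimal values of Eq.~\eqref{distri_opt4} nondecreasing and bounded above by the optimum $F^\star$ of Eq.~\eqref{distri_opt3}, hence convergent. The paper states the bound through the non-increasing ratio $F^\star/F^{n\delta\star}\geq\beta\geq 1$ (which tacitly needs $F>0$) and simply asserts the containment, whereas you bound the values directly and justify that each cut is valid via the subgradient inequality with the $-\epsilon$ shift in $c_i^l$, so yours is a cleaner version of the same argument.
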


\begin{proof}
Let $\mathcal{R}_i$ denote the feasible region of Eq. $\eqref{distri_opt3}$ and $\mathcal{P}_i$ represent the polyhedral approximation of Eq. $\eqref{distri_opt4}$'s feasible region. The polyhedron $\mathcal{P}^{n\delta}_i$ obtained at iteration $n\delta$ contains the point $({\mathbf{p}_i^{(v) n\delta}}, \mathbf{z}_i^{n\delta})$, implying the nested containment relationship: $\mathcal{P}^{0}_i \supseteq \mathcal{P}^{\delta}_i \supseteq \cdots \supseteq \mathcal{P}^{n\delta}_i \supseteq \mathcal{R}_i$. For iteration $n\delta$, denote the optimal objective value of Eq. $\eqref{distri_opt4}$ as $F({\mathbf{p}_i^{n\delta\star}}, {\mathbf{z}i^{n\delta \star}}, {\mathbf{q}{i'}^{n\delta \star}})$, then the following inequality holds,
\begin{equation}
    \hspace{-10pt} F(\{\mathbf{p}_i^{0\star}\}, \{\mathbf{z}_i^{0\star}\}\{\mathbf{q}_{i'}^{0 \star}\}) \leq F(\{\mathbf{p}_i^{\delta\star}\}, \{\mathbf{z}_i^{\delta \star}\}\{\mathbf{q}_{i'}^{\delta \star}\}) 
    \hspace{-10pt} \leq \cdots \leq  F(\{\mathbf{p}_i^{n\delta\star}\}, \{\mathbf{z}_i^{n\delta \star}\}\{\mathbf{q}_{i'}^{n\delta \star}\}).
\end{equation}
Let the optimal objective value of Eq. $\eqref{distri_opt3}$ be $F^{\star}$, then we have 
\begin{equation}
    \hspace{-10pt} F^{\star} / F(\{\mathbf{p}_i^{0\star}\}, \{\mathbf{z}_i^{0\star}\}\{\mathbf{q}_{i'}^{0 \star}\}) \geq  F^{\star} / F(\{\mathbf{p}_i^{\delta\star}\}, \{\mathbf{z}_i^{\delta \star}\}\{\mathbf{q}_{i'}^{\delta \star}\}) 
     \geq  \cdots \geq  F^{\star} /  F(\{\mathbf{p}_i^{n\delta\star}\}, \{\mathbf{z}_i^{n\delta \star}\}\{\mathbf{q}_{i'}^{n\delta \star}\}) \geq  \beta.
\end{equation}
From the above non-increasing sequence, it can be seen that when $n \rightarrow \infty$, the optimal objective value of Eq. $\eqref{distri_opt4}$ converges to $\beta, \beta \geq 1$.
\end{proof}

\begin{assumption}
\label{ass1}
    \textbf{(Lipschitz continuous)} We assume that $\mathcal{L}_p$ has $L'$-Lipschitz continuous gradients ($L>0$). For any $\mathbf{x}$ and $\mathbf{x}'$, it satisfies
    \begin{equation}
        \| \nabla \mathcal{L}_p(\mathbf{x}) - \nabla \mathcal{L}_p(\mathbf{x}') \| \leq L' \| \mathbf{x} - \mathbf{x}' \|
    \end{equation}
\end{assumption}

\begin{assumption}
    \label{ass2}
    \textbf{(Boundedness)} The optimization variables are bounded, i.e. $\|\mathbf{p}_i^{(v)}\|^2 \leq \alpha_1$, $\|\mathbf{z}_{i}\|^2 \leq \alpha_1$, $\|\mathbf{q}_{i'}^{(v)}\|^2 \leq \alpha_2$, $\|\lambda_i^l\|^2 \leq \alpha_3$), and before obtaining the $\epsilon$-stationary point the variables in parameter server satisfy that $\| \mathbf{z}_i^{t+1} - \mathbf{z}_i^t \|^2 + \sum_{l} \| \lambda_i^{l t+1} - \lambda_i^{l t} \|^2 \geq \xi$, where $\xi > 0$ is a relative small constant. The change of the variables in the parameter server is upper bounded within $\tau$ iterations, i.e., $\| \mathbf{z}_i^t - \mathbf{z}_i^{t-k} \|^2 \leq \tau \xi k_1$, $ \sum_{l} \| \lambda_i^{lt} - \lambda_i^{l t-k} \|^2\leq \tau \xi k_1$, where $1 < k < \tau$ and $k_1 > 0$ is a constant. 
\end{assumption}

\begin{theorem}[Iteration Complexity]
\label{theorem2}
Let Assumptions \ref{ass1} and \ref{ass2} hold. Then, the iteration complexity of our proposed AsynDBT to obtain $\epsilon$-stationary point is bounded by,
\begin{align}
\label{eq:A97}
T( \epsilon ) \! \sim  \! \mathcal{O}\Bigg(\max  \Big\{ {(\frac{{4M\alpha_3}}{{{{\eta _{\lambda}}}^2}} \!+\! \frac{{4N\alpha_4}}{{{{\eta _{\boldsymbol{\theta }}}}^2}})^2}\frac{1}{{{\epsilon ^2}}}, \nonumber {(\frac{{4{{{(d_7 + \frac{{{{\eta _{\boldsymbol{\theta }}}}(N  -  S){{L}^2}}}{2})}}} (\mathop d\limits^ -  +  k_d\tau(\tau  -  1))  {d_6}}}{{{\epsilon}}} + (T_1 + 2)^{\frac{1}{2}})^2} \Big\} \Bigg),  
\end{align}
such that $||\nabla G^t||^2  \le \epsilon $.
\end{theorem}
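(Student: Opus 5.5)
Following the standard template for asynchronous primal--dual analyses with regularized Lagrangians (as in \cite{xu2023unified}), we would aim to prove Theorem \ref{theorem2} through a potential-function argument. First we fix the stationarity measure. Let $\boldsymbol{\theta}$ collect the primal variables $(\{\mathbf{p}_i^{(v)}\},\{\mathbf{q}_{i'}^{(v)}\},\{\mathbf{z}_i\})$. Define $\nabla G^t$ as the vector of gradient-mapping residuals
\[
\tfrac{1}{\eta_{\boldsymbol{\theta}}}\bigl(\boldsymbol{\theta}^t-\operatorname{proj}(\boldsymbol{\theta}^t-\eta_{\boldsymbol{\theta}}\nabla_{\boldsymbol{\theta}}\tilde{\mathcal{L}}_p^t)\bigr),
\qquad
\tfrac{1}{\eta_{\lambda}}\bigl(\lambda^t-\operatorname{proj}_\lambda(\lambda^t+\eta_\lambda\nabla_\lambda\tilde{\mathcal{L}}_p^t)\bigr).
\]
Using nonexpansiveness of the projections, we would bound $\|\nabla G^t\|^2$ by a constant multiple of the successive differences $\|\boldsymbol{\theta}^{t+1}-\boldsymbol{\theta}^t\|^2/\eta_{\boldsymbol{\theta}}^2+\|\lambda^{t+1}-\lambda^t\|^2/\eta_\lambda^2$. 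There is also an error from staleness, since updates use $\hat{k}_v$ rather than $t$. By Assumption \ref{ass1} this error is at most $L^2\|\boldsymbol{\theta}^t-\boldsymbol{\theta}^{\hat{k}_v}\|^2$. By Assumption \ref{ass2} it is controlled by $\tau\xi k_1$ times the number of missed rounds.

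Second, we derive a primal descent inequality. Starting from the $L'$-smoothness of $\mathcal{L}_p$ and the projected-gradient optimality conditions for \eqref{update_p_local}, \eqref{update_q_local} and \eqref{update_z}, we would show
\[
\tilde{\mathcal{L}}_p(\boldsymbol{\theta}^{t+1},\lambda^t)-\tilde{\mathcal{L}}_p(\boldsymbol{\theta}^t,\lambda^t)\le -\Bigl(\tfrac{1}{\eta_{\boldsymbol{\theta}}}-\tfrac{L}{2}\Bigr)\|\boldsymbol{\theta}^{t+1}-\boldsymbol{\theta}^t\|^2+\tfrac{\eta_{\boldsymbol{\theta}}(N-S)L^2}{2}\sum_{k=1}^{\tau}\|\boldsymbol{\theta}^{t+1-k}-\boldsymbol{\theta}^{t-k}\|^2+\text{(sign-term)}.
\]
Here $N-S$ counts the inactive workers. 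The $\psi\,\text{sign}$ terms are bounded by $\psi\sqrt{N}$ times the step, using the boundedness constants $\alpha_1,\alpha_2$. The Byzantine contributions enter only through these sign terms, and they are bounded in the same way.

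Third, we derive a dual ascent inequality. The $-\tfrac{c_1}{2}\|\lambda\|^2$ regularization makes $\tilde{\mathcal{L}}_p$ strongly concave in $\lambda$. This lets us bound $\tilde{\mathcal{L}}_p(\boldsymbol{\theta}^{t+1},\lambda^{t+1})-\tilde{\mathcal{L}}_p(\boldsymbol{\theta}^{t+1},\lambda^t)$, and the usual trick of comparing consecutive dual updates controls the resulting $\|\lambda^{t+1}-\lambda^t\|^2$ terms. We would then build a potential $\Phi^t=\tilde{\mathcal{L}}_p^t+d_6\|\lambda^{t+1}-\lambda^t\|^2+\sum_{k=1}^{\tau}k_d\,\|\boldsymbol{\theta}^{t+1-k}-\boldsymbol{\theta}^{t-k}\|^2$ with weights chosen so that the staleness sum telescopes. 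This is where the factor $\bar d+k_d\tau(\tau-1)$ arises. The choice should give $\Phi^{t}-\Phi^{t+1}\ge d_7^{-1}\bigl(\|\boldsymbol{\theta}^{t+1}-\boldsymbol{\theta}^t\|^2+\|\lambda^{t+1}-\lambda^t\|^2\bigr)$, up to the dependence of $c_1^t$ on $t$. Because the regularizer $c_1^t$ is diminishing and the cutting planes change every $\delta$ steps, we would split the horizon into the first $T_1$ iterations and the remainder. On the remainder, the drift of $c_1^t$ is summable and $\Phi$ is bounded below by Assumption \ref{ass2}: $\|\lambda\|^2\le\alpha_3$ and the primal variables are bounded, so $F$ is bounded on a compact set.

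Finally, we sum the descent from $T_1$ to $T(\epsilon)$ and combine it with the first step. This yields $\min_t\|\nabla G^t\|^2\lesssim d_6\,(d_7+\eta_{\boldsymbol{\theta}}(N-S)L^2/2)(\bar d+k_d\tau(\tau-1))/(\sqrt{T}-\sqrt{T_1+2})$, which produces the second term of \eqref{eq:A97}. The first term comes from the dual side. The gap between the regularized and unregularized problems is bounded by $c_1^t\|\lambda\|^2\le c_1^t M\alpha_3$. Bounding this, together with the primal diameter term $N\alpha_4$ divided by the squared step sizes, and asking that it be at most $\epsilon$ forces $T\gtrsim(4M\alpha_3/\eta_\lambda^2+4N\alpha_4/\eta_{\boldsymbol{\theta}}^2)^2/\epsilon^2$. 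The lower bound $\xi$ on the server-variable movement in Assumption \ref{ass2} lets us convert the residual sum into a per-iteration bound before $\epsilon$-stationarity is reached.

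The main obstacle will be the potential construction in the third step. The staleness cross-terms, the nonsmooth sign terms and the time-varying polyhedron all compete, and the weights $d_6$, $k_d$ must be tuned so that every coefficient stays positive. The first thing we would try is to treat each cutting-plane update as a bounded perturbation of $\Phi$, using the $\gamma$-thresholded deletion rule to limit how many constraints are active.
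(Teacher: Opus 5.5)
Your skeleton mirrors the paper's Lemma~1--3 route: primal descent with staleness controlled through Assumption~\ref{ass2}, a dual inequality from the strong concavity of $\tilde{\mathcal{L}}_p$ in $\lambda$, a potential with delayed terms, then summation. But the step you flag as the main obstacle is where the plan breaks. Treating each cutting-plane update as a bounded perturbation of $\Phi$ cannot work, for three reasons.
\begin{itemize}
\item Algorithm~\ref{alg:1} modifies the polyhedron every $\delta$ iterations indefinitely, so on $[T_1,T]$ you collect roughly $(T-T_1)/\delta$ jumps that do not shrink.
\item Removing a plane whose multiplier is below $\gamma$ shifts $\tilde{\mathcal{L}}_p$ by an amount of order $\gamma$ times a bounded constraint value. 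Adding a plane changes the map $\lambda\mapsto\nabla_\lambda\tilde{\mathcal{L}}_p$, so your comparison of consecutive dual steps fails across that iteration.
\item The $\gamma$-rule by itself gives no bound on $|\mathcal{P}_i^t|$.
\end{itemize}
These jumps accumulate linearly in $T$, so $\sum_t(\Phi^t-\Phi^{t+1})$ is no longer controlled by $\Phi^{T_1}-\inf\Phi$ and the rate is lost. The argument needs a fixed Lagrangian on the summation window. The polyhedron, with a bounded number of planes, must stop being updated after finitely many iterations, and the descent is summed only from that point on. That is the role of the offset $(T_1+2)^{1/2}$ in \eqref{eq:A97}. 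It is not a burn-in for the drift of $c_1^t$, which telescopes from the start because $\|\lambda_i^l\|^2\le\alpha_3$.

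Two further steps would not yield the stated bound.

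\textbf{The sign terms.} The $\psi\,\mathrm{sign}$ terms are first order in the step and do not telescope. Worse, by nonexpansiveness the gradient mapping of $\tilde{\mathcal{L}}_p$ differs from the scaled successive difference by up to the norm of the sign vector. So your bound on $\|\nabla G^t\|^2$ carries an additive floor of order $\psi^2$ times the dimension. It is larger in the $\mathbf{z}$-block, where $N_w+B$ signs are summed and the $B$ Byzantine ones are arbitrary. Below that floor you cannot certify $\|\nabla G^t\|^2\le\epsilon$. Since \eqref{eq:A97} contains neither $\psi$ nor $B$, it can only come from analysing the iteration without these terms. Either set them aside explicitly or state a $\psi$-dependent neighbourhood as the conclusion.

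\textbf{The first term of \eqref{eq:A97}.} It is not a value gap $c_1^tM\alpha_3$, which is linear in $c_1^t$ and gives the wrong power of $\epsilon$. Nor is it a primal diameter over a fixed step, which does not decrease with $T$. It is the squared regularization bias in the dual block of $\nabla G^t$. The iterates ascend $\nabla_\lambda\mathcal{L}_p-c_1^t\lambda$, so each multiplier contributes $(c_1^t)^2\|\lambda_i^l\|^2\le(c_1^t)^2\alpha_3$. For $c_1^t=1/(\eta_\lambda(t+1)^{1/4})$ this equals $\alpha_3/(\eta_\lambda^2\sqrt{t+1})$, and forcing the total below a fixed fraction of $\epsilon$ gives exactly the first term. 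The pair $(\alpha_4,\eta_{\boldsymbol{\theta}})$ enters the same way as $(\alpha_3,\eta_\lambda)$, i.e.\ as the bound for a second family of regularized multipliers, not as a primal diameter.

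\textbf{The second term.} The same schedule drives it. Because the strong-concavity modulus is $c_1^t$, the dual error terms carry a factor $1/(\eta_\lambda(c_1^t)^2)$. So the primal step must shrink like $(c_1^t)^2\propto t^{-1/2}$. The potential decrease then controls a $t^{-1/2}$-weighted sum of $\|\nabla G^t\|^2$ over $t>T_1$, which is where $\sqrt{T}-\sqrt{T_1+2}$ comes from. Your sketch writes down both final displays without fixing this schedule, so neither is actually derived.
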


We provide a detailed derivation process in the Appendix\footnote{https://github.com/maggiemh/AsynDBT}, which consists of the following four steps. First, we derive Lemma 1 (see Appendix C) based on Assumption \ref{ass1} and Assumption \ref{ass2}. Next, by combining the Cauchy-Schwarz inequality and Lemma 1, we obtain Lemma 2 (see Appendix D). Furthermore, leveraging Lemma 1 along with Lemma 2, we derive Lemma 3 (see Appendix E). Finally, by integrating the above three lemmas, we formally derive Theorem \ref{theorem2} (see Appendix F) for our proposed AsynDBT. 

According to Theorem \ref{theorem2}, the upper bound on the iteration complexity of AsynDBT for achieving an $\epsilon$-stationary point is $O(1/\epsilon^2)$. The iteration complexity is affected by several parameters in AsynDBT, such as $\epsilon$, $S$,  $N$. $S$ denotes the number of active workers in each iteration and $N$ represents the number of workers in federated learning framework. When a smaller $\epsilon$ is chosen, the iteration complexity increases. On the other hand, increasing the number of active workers $S$ in each iteration reduces the iteration complexity. However, as the number of clients $N$ grows, the iteration complexity rises exponentially.

\section*{Experiments}
To assess the performance of AsynDBT in domain-specific and Natural Language Understanding (NLU) tasks, we conducted experiments on six classification datasets.

\subsection*{Datasets Description}
The first dataset focuses on the task of terminological relationship recognition in the 5G network. In our preliminary work, we created a 5G terminology knowledge graph based on 3GPP protocol texts. This graph labels terminology pairs as either \texttt{relevant} or \texttt{irrelevant}, and provides a corresponding segment of 3GPP protocol text for each pair to assist with reasoning.
For NLU tasks, we utilize five datasets from the GLUE benchmark \cite{wang2018glue}.
These datasets cover a range of tasks: COLA and SST-2 for single-sentence classification, MRPC and QQP for syntactic comparison, and QNLI for inference.
Table \ref{tab:templates} provides an overview of the prompt templates used for all datasets in our experiments. In these templates, the placeholder \textcolor{mygreen}{[VAR]} indicates the prompt fragment that is subject to optimization.

\begin{table}[h]
\centering
\caption{The prompt templates for all the datasets}
\begin{tabular}{m{0.09\linewidth}m{0.8\linewidth}}
\toprule
Datasets & Templates \\ \midrule
5G   & In 5G network, Whether \textcolor{myblue}{[WORD1]} related   to \textcolor{myblue}{[WORD2]}? Some contextual information: \textcolor{myblue}{[TEXT]}. Respond ONLY with "Yes" or "No". Note:   \textcolor{mygreen}{[VAR]}.     \\
COLA & Is this sentence \textcolor{myblue}{[SENTENCE1]}   grammatically correct? Respond ONLY with "Yes" or "No".   Note: \textcolor{mygreen}{[VAR]}.                                           \\
SST2 & How is the sentiment of the sentence \textcolor{myblue}{[SENTENCE1]}? Respond ONLY with "Great" or   "Terrible".  Note: \textcolor{mygreen}{[VAR]}.                                     \\
MRPC & Whether sentence \textcolor{myblue}{[SENTENCE1]}   and sentence \textcolor{myblue}{[SENTENCE2]} are semantically the same? Respond ONLY with "Yes" or   "No".  Note: \textcolor{mygreen}{[VAR]}.           \\
QQP  & Whether sentence \textcolor{myblue}{[SENTENCE1]}   and sentence \textcolor{myblue}{[SENTENCE2]} are paraphrased from each other? Respond ONLY with "Yes" or   "No".  Note: \textcolor{mygreen}{[VAR]}.    \\
QNLI & Whether sentence \textcolor{myblue}{[SENTENCE1]} and sentence \textcolor{myblue}{[SENTENCE2]} have semantic entailment relations? Respond ONLY with "Yes" or   "No".  Note: \textcolor{mygreen}{[VAR]}. \\ \bottomrule
\end{tabular}
\label{tab:templates}
\end{table}

For the 5G dataset, which has distinct training, validation, and test sets, 10 samples per class were randomly selected from the training and validation sets, and 200 samples were randomly chosen from the original test set. For each GLUE benchmark dataset, we randomly selected 10 samples per class from the original training set to create a reduced training set. We randomly chose 10 samples per class from the original labeled validation set for validation and 200 samples randomly drawn from the remaining validation set for test. Additionally, we randomly selected 50 samples from the remaining original training set for each class to form the ICL training set. Table \ref{tab:sample_ratio} shows the details for all the test datasets.

\begin{table}[h]
\centering
\caption{The details of all the test datasets}
\begin{tabular}{@{}lll@{}}
\toprule
Datasets & \multicolumn{2}{l}{Number of a samples} \\ \midrule
5G       & Yes: 99          & No: 101              \\
COLA     & Yes: 141         & No: 59                \\
SST2     & Great: 99        & Terrible: 101        \\
MRPC     & Yes: 135         & No: 65               \\
QQP      & Yes: 89          & No: 111              \\
QNLI     & Yes: 108         & No: 92               \\ \bottomrule
\end{tabular}
\label{tab:sample_ratio}
\end{table}

\subsection*{Baselines}
\begin{itemize}
    \item \textbf{RoBERTa} \cite{roberta}: We convert all sentences in a test sample into vectors using pre-trained RoBERTa\footnote{\url{https://huggingface.co/roberta-base}}. These vectors are summed and fed into a single-layer MLP classifier, which is trained using cross-entropy loss.
    
    \item \textbf{ManualPrompt (MP)}: The prompt template is as described in Table \ref{tab:templates}, but does not include the \texttt{Note} and its aftermath.
    
    \item \textbf{Zero-shot CoT} \cite{kojima2022large}: Based on the ManualPrompt method, we append the text \texttt{Let's think step by step, first output your analysis, and then output the final answer} to each test sample.

    \item \textbf{Random ICL}: This method involves randomly selecting samples from the ICL training set as in-context samples, ensuring only one sample per class. Note that the demonstrations are randomly selected in each run.

    \item \textbf{KATE} \cite{liu2022makes}: This approach first maps the test sample and ICL training set into vectors using a pre-trained RoBERTa. The 5 nearest neighbors of the test sample in the vector space are selected as in-context samples.

    \item \textbf{BDPL} \cite{diao2022black}: In this method, a prompt fragment is learned using the policy gradient approach and then appended to the original prompt.
    
    \item \textbf{AdaICL} \cite{mahankali2024one}: It is a model-adaptive method that uses LLMs to predict unlabeled data, assigning an uncertainty score to each instance.
\end{itemize}

Besides the proposed AsynDBT approach, its centralized version, cenDBT, also participates in the comparison. cenDBT optimizes the parameters only on the server, so there is no need to optimize the consensus variable $\{\mathbf{z}_i\}$ and the corresponding dual variable $\{\boldsymbol{\rho}_i^{(v)}\}$.

\subsection*{Experimental Details}

\textit{Hyperparameter Setting}: The API of LLM service is supported by qwen-max\footnote{\url{https://tongyi.aliyun.com}}. Both BDPL and the proposed method are trained using the Adam optimizer with a learning rate of $10^{-4}$. Specifically, the Lagrangian dual variable of the proposed method has a learning rate of $10^{-1}$. The learning rate of RoBERTa is $2 \times 10^{-5}$. The maximum of epochs for all the algorithms is 500 and the early stopping technique is used on the validation set. In parameter settings in $\eqref{bilevel1}$ are $M=75$, $N=100$ and $V=50$. The value of $U$ is determined by the dataset. the setting of BDPL is consistent with the proposed method. Besides, we use accuracy to evaluate the classification performance of different models.

\textit{Experimental Environment}: We conduct all experiments
on a Linux server with four 12 GB GPUs with NVIDIA
TITAN X (Pascal). Besides, we use the deep learning
framework of Pytorch 1.6.0 with the programming language of python 3.7.
\begin{figure}[t]
\centering
\includegraphics[width=0.4\linewidth]{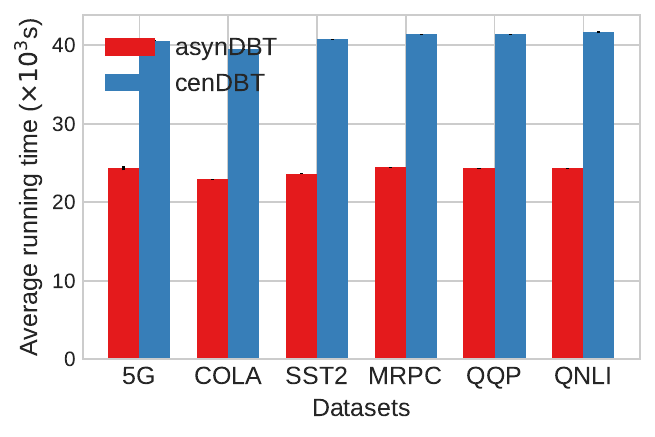}
\caption{Comparison of time consumption.}
\label{fig:compare_time}
\end{figure}
\begin{figure}[t]
    \centering    
    \begin{subfigure}[b]{0.4\linewidth}
        \includegraphics[width=\linewidth]{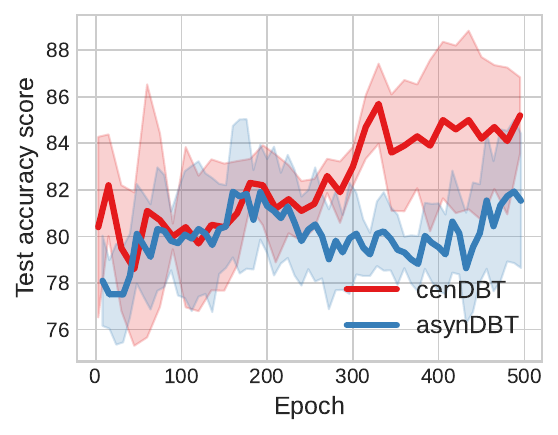}
        \caption{Accuracy score}
    \end{subfigure}
    \begin{subfigure}[b]{0.4\linewidth}
        \includegraphics[width=\linewidth]{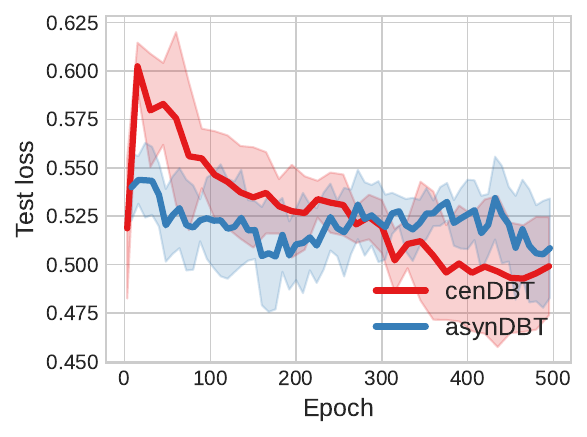}
        \caption{Cross-entropy loss}
    \end{subfigure}
    \caption{(a) Accuracy and (b) Cross-entropy loss on the 5G dataset.}
    \label{fig:5g}
\end{figure}

\begin{figure}[t]
    \centering
    \begin{subfigure}[b]{0.4\linewidth}
        \includegraphics[width=\linewidth]{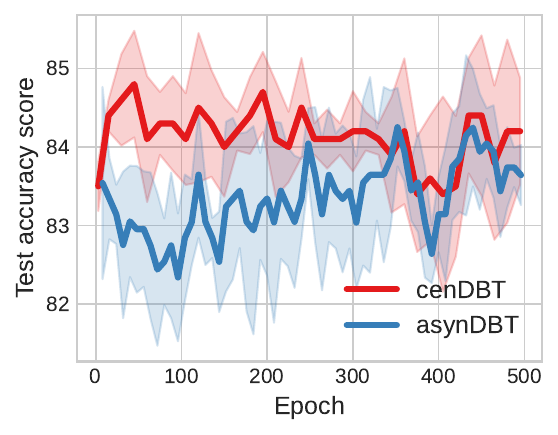}
        \caption{Accuracy score}
    \end{subfigure}
    \begin{subfigure}[b]{0.4\linewidth}
        \includegraphics[width=\linewidth]{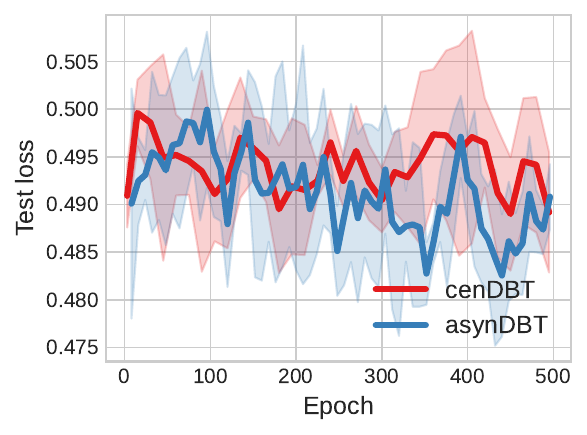}
        \caption{Cross-entropy loss}
    \end{subfigure}
    \caption{(a) Accuracy and (b) Cross-entropy loss on the COLA dataset.}
    \label{fig:cola}
\end{figure}
\begin{figure}[t]
    \centering
    \begin{subfigure}[b]{0.4\linewidth}
        \includegraphics[width=\linewidth]{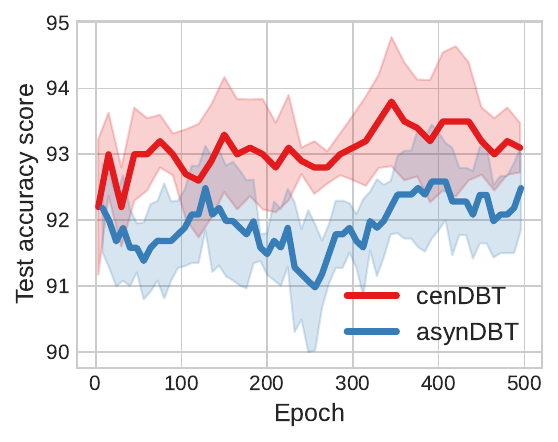}
        \caption{Accuracy score}
    \end{subfigure}
    \begin{subfigure}[b]{0.4\linewidth}
        \includegraphics[width=\linewidth]{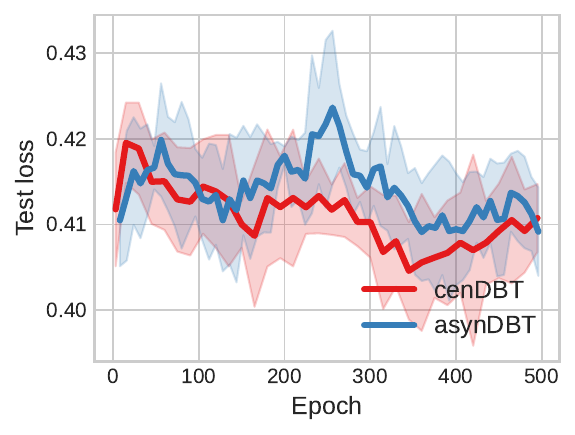}
        \caption{Cross-entropy loss}
    \end{subfigure}
    \caption{(a) Accuracy and (b) Cross-entropy loss on the SST2 dataset.}
    \label{fig:sst2}
\end{figure}

\section*{Experimental Results}
\subsection*{The Performance of Classification}

\begin{table*}[t]
\centering
\caption{The overall results of different models. }
\begin{tabular}{@{}lllllllc@{}}
\toprule
                       & 5G                                & COLA                              & SST2                              & MRPC                              & QQP                               & QNLI & Average Rank                             \\ \midrule
Roberta \cite{roberta}               & 52.80 \begin{tiny}1.68\end{tiny} & 68.70 \begin{tiny}2.49\end{tiny} & 85.10 \begin{tiny}4.63\end{tiny}  & 65.00 \begin{tiny}3.46\end{tiny} & 55.90 \begin{tiny}1.64\end{tiny} & 51.60 \begin{tiny}3.31\end{tiny} & 8.67\\
Manual Prompt           & 58.00  & 83.00  & 93.50  & 77.00  & 81.00  & 84.00 & 6.17 \\
Zero-shot CoT \cite{kojima2022large}          & 78.50  & 84.50  & 90.50  & 76.00  & 73.50  & 79.00  & 6.50\\
RandomICL              & 69.70 \begin{tiny}0.97\end{tiny} & 71.00 \begin{tiny}1.70\end{tiny} & 72.80 \begin{tiny}1.75\end{tiny} & 78.50 \begin{tiny}0.71\end{tiny} & 82.30 \begin{tiny}1.35\end{tiny} & 87.80 \begin{tiny}0.57\end{tiny} & 5.17 \\
KATE  \cite{liu2022makes}                 & 68.10 \begin{tiny}1.19\end{tiny} & 70.90 \begin{tiny}0.55\end{tiny} & 69.50 \begin{tiny}1.54\end{tiny} & 77.90 \begin{tiny}0.42\end{tiny} & 79.00 \begin{tiny}2.03\end{tiny} & 87.40 \begin{tiny}1.43\end{tiny} & 6.83 \\
BDPL   \cite{diao2022black}                 & 67.20 \begin{tiny}1.35\end{tiny} &\textbf{86.70} \begin{tiny}0.27\end{tiny} & \underline{94.30} \begin{tiny}0.27\end{tiny} & 80.40 \begin{tiny}0.42\end{tiny} & \underline{83.20} \begin{tiny}0.27\end{tiny}  & 85.80 \begin{tiny}0.27\end{tiny} & 3.67 \\ 
AdaICL  \cite{mahankali2024one}                  & 78.43 \begin{tiny}1.03\end{tiny} &84.95 \begin{tiny}0.18\end{tiny} & 91.53 \begin{tiny}0.21\end{tiny} & 81.29 \begin{tiny}0.33\end{tiny} & 81.85 \begin{tiny}0.25\end{tiny}  & 87.58 \begin{tiny}0.23\end{tiny} & 3.83\\ \midrule
\textit{cenDBT(ours)}  & \textbf{87.21} \begin{tiny}1.30\end{tiny} & 84.92 \begin{tiny}0.65\end{tiny} & \textbf{94.40} \begin{tiny}0.74\end{tiny} & \textbf{83.51} \begin{tiny}0.59\end{tiny} & \textbf{83.31} \begin{tiny}0.21\end{tiny}  & \textbf{88.52 }\begin{tiny}0.77\end{tiny} & 1.50 \\
\textit{AsynDBT(ours)} & \underline{85.05} \begin{tiny}1.01\end{tiny} & \underline{85.07} \begin{tiny}0.41\end{tiny} & 92.98 \begin{tiny}0.69\end{tiny} & \underline{82.93} \begin{tiny}1.39\end{tiny} & 81.95 \begin{tiny}0.59\end{tiny}  & \underline{87.79} \begin{tiny}0.43\end{tiny} & 2.67 \\ \bottomrule               
\end{tabular}
\label{tab:acc}
\end{table*}

\begin{figure}[t]
    \centering
    \begin{subfigure}[b]{0.4\linewidth}
        \includegraphics[width=\linewidth]{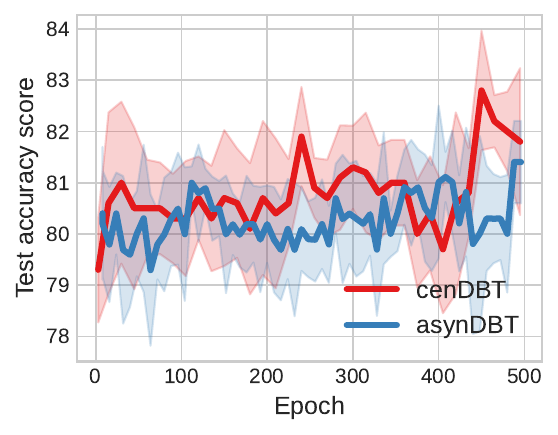}
        \caption{Accuracy score}
    \end{subfigure}
    \begin{subfigure}[b]{0.4\linewidth}
        \includegraphics[width=\linewidth]{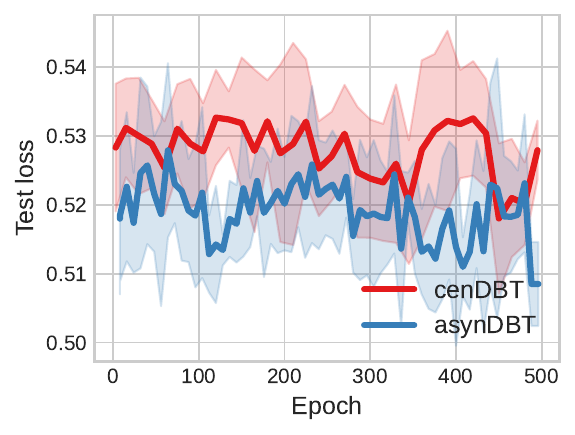}
        \caption{Cross-entropy loss}
    \end{subfigure}
    \caption{(a) Accuracy and (b) Cross-entropy loss on the MRPC dataset.}
    \label{fig:mrpc}
\end{figure}

 \begin{figure}[t]
     \centering
     \begin{subfigure}[b]{0.4\linewidth}
         \includegraphics[width=\linewidth]{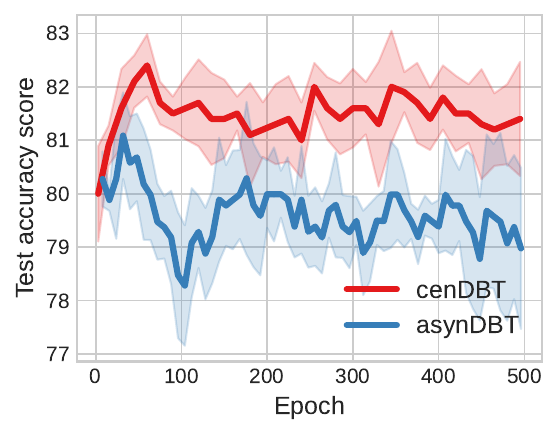}
        \caption{Accuracy score}
     \end{subfigure}
     \begin{subfigure}[b]{0.4\linewidth}
         \includegraphics[width=\linewidth]{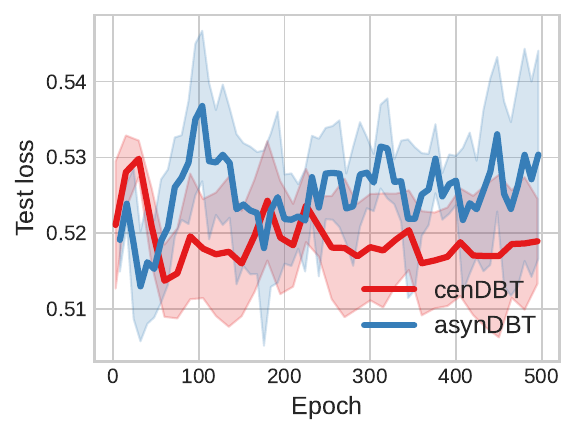}
         \caption{Cross-entropy loss}
     \end{subfigure}
     \caption{(a) Accuracy and (b) Cross-entropy loss on the QQP dataset.}
     \label{fig:qqp}
 \end{figure}

\begin{figure}[t]
    \centering
    \begin{subfigure}[b]{0.4\linewidth}
        \includegraphics[width=\linewidth]{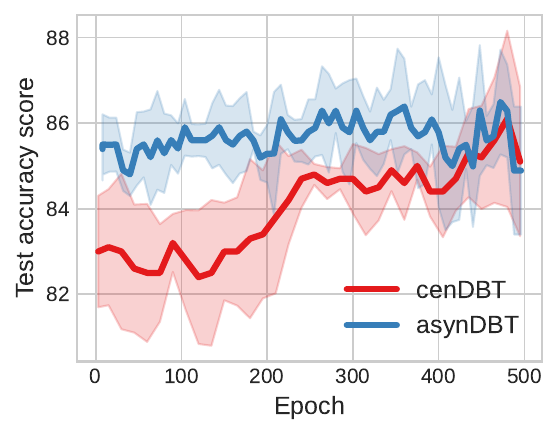}
        \caption{Accuracy score}
    \end{subfigure}
    \begin{subfigure}[b]{0.4\linewidth}
        \includegraphics[width=\linewidth]{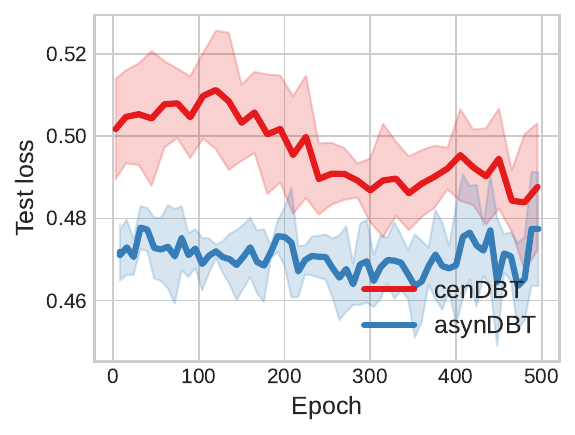}
        \caption{Cross-entropy loss}
    \end{subfigure}
    \caption{(a) Accuracy and (b) Cross-entropy loss on the QNLI dataset.}
    \label{fig:qnli}
\end{figure}

Table \ref{tab:acc} presents the average accuracy and standard deviations of the compared methods over five independent runs, with AsynDBT reporting the average performance of three workers. The optimal and suboptimal performance on each dataset is bolded and underlined, respectively. By ranking the experimental results of each model, we use the average ranking to measure the prediction performance of different models. The output strategy of LLM is set as greedy sampling.

The classification task on the 5G dataset is especifically challenging because it involves two acronyms that are uncommon in general corpora and have not been covered by LLM fine-tuning. The auxiliary text for each pair provides only contextual information, not explicit interpretations. Compared with the optimal baseline, our proposed method improves the accuracy by nearly 10\%. Owing to the isomorphism in data distribution, cenDBT generally outperforms AsynDBT. However, AsynDBT still achieves competitive results relative to the centralized algorithm and finally reaches a suboptimal performance. This highlights both the effectiveness and computational efficiency of the proposed algorithms.

For the GLUE benchmark, our algorithm achieves optimality or suboptimality on almost all dataset. For instance, on MRPC and QNLI datasets, cenDBT attains the highest accuracy, while AsynDBT achieves the next best performance. Among the baselines, BDPL is a prompt-based learning method that guides the model to generate more interpretable outputs by designing specific prompt statements. It optimizes the parameters of a categorical distribution to generate the prompt fragment appended to the original prompt. However, it requires labeled data to optimize the prompt vector, and its initialization can significantly impact experimental results. In contrast, our proposed method adapts to new tasks without parameter updates, saving computational resources and making it suitable for resource-constrained environments, such as edge devices with limited computing power. 

Furthermore, compared to the heuristic KATE shown in Table \ref{tab:acc}, the proposed method consistently refines demonstrations via LLM feedback, substantially improving downstream task accuracy. This finding further highlights the benefit of using diverse in-context samples for downstream tasks and underscores the adaptability of our algorithm in various NLU scenarios.


%

\begin{table}[!t]
\centering
\caption{The results of ablation study.}
\begin{tabular}{@{}lcccccc@{}}
\toprule
      & 5G          & COLA  &   SST2     & MRPC          & QQP           & QNLI  \\ \midrule
cenDBT         & \textbf{89} & \textbf{85.5} & 
 \textbf{95.5} &\textbf{84.5} & \textbf{83.5} & \textbf{89.5} \\
w.opt.ICL    & 84          & 86.5   & 94.5       & 80.5          & 82            & 88            \\
w.opt.prompt & 60          & 84.5  & 93.0        & 77            & 79            & 84            \\
Plain          & 58          & 83     &93.5       & 77            & 81            & 84   \\ \bottomrule
\end{tabular}
\label{tab:ablation}
\end{table}

\subsection*{Computational Efficiency}
We demonstrate the computational efficiency gains of asynchronous distributed optimization through numerical results. As shown in Figure \ref{fig:compare_time}, AsynDBT and cenDBT are compared on their training times for 500 epochs across all datasets. The results indicate that, on all six datasets, it is evident that AsynDBT reduces training time by nearly 40\% while maintaining task performance comparable to cenDBT, which clearly highlights the superior computational efficiency of AsynDBT.

Besides, Figure \ref{fig:5g}, Figure \ref{fig:cola}, Figure \ref{fig:sst2}, Figure \ref{fig:mrpc}, Figure \ref{fig:qqp}, and Figure \ref{fig:qnli} illustrate the test accuracy and the cross-entropy loss curves over iterations for both AsynDBT and cenDBT on all datasets. In each figure, the solid blue line represents the mean performance of cenDBT, with the blue shading indicating its standard deviation. Similarly, the solid red line and shading correspond to the mean and standard deviation for AsynDBT.

Take Figure \ref{fig:5g} as an example, both cenDBT and AsynDBT gradually stabilize during training, with their cross-entropy loss converging to around 0.5. This convergence indicates that both algorithms reach a stable state where their outputs are consistent and reliable over time. Similar findings can be drawn from experimental results on other datasets.

\subsection*{Ablation Study}
\label{sec:ablation}
To evaluate the effect of optimizing in-context samples and prompt fragments on downstream task performance, we conduct ablation studies comparing accuracy across multiple datasets under different configurations:
(a) No optimization content (plain); (b) Only the prompt fragment optimized (w. opt. prompt); (c) Only the in-context samples optimized (w. opt. ICL).
In each setting, all optimized components are derived from the best results of the cenDBT algorithm for a specific run. Table \ref{tab:ablation} presents the numerical results. 

The findings show that optimizing the prompt fragment leads to a modest performance boost, whereas selecting appropriate demonstration markedly enhances performance and is key to achieving good results. Interestingly, on COLA dataset, cenDBT’s overall performance is slightly lower than that of the w. opt. ICL setting. Similarly, on QQP dataset, performance when optimizing the prompt fragment is actually lower than the plain setting. This may result from excessively long prompt fragments that negatively affect the LLM’s ability to process semantic information.

\section*{Conclusion}
In summary, this paper presents AsynDBT, a novel asynchronous distributed bilevel tuning framework designed to jointly optimize demonstration selection and prompt editing for efficient In-Context Learning (ICL). To the best of our knowledge, this work represents the first attempt to formulate ICL optimization as a bilevel black-box problem. Besides, we developed an efficient asynchronous distributed algorithm that effectively preserves data privacy and mitigates the "straggler issue" in heterogeneous environments. Furthermore, we provide a theoretical proof for the convergence of our proposed algorithm. The effectiveness and efficiency of our proposed AsynDBT are validated on six public benchmark datasets.




 
Despite its current strengths, AsynDBT remains limited by its reliance on static training datasets, which may lead to hallucination issues in domain-specific scenarios. Recent studies, such as the work on TrumorGPT\cite{Hang2025TrumorGPTGR}, have demonstrated that Graph-Based Retrieval-Augmented Generation (GraphRAG) can effectively mitigate the hallucination issues common in LLMs by leveraging updated semantic knowledge graphs. Inspired by these findings, our future work will explore the integration of a GraphRAG within our distributed bilevel optimization framework, specifically targeting high-stakes technical fields such as network operation and maintenance. Specifically, the upper-level objective will adaptively optimize retrieval parameters to ensure high-fidelity knowledge extraction from semantic graphs, while the lower-level task will perform ICL-based semantic reasoning grounded in retrieved factual triples.  This integration will empower our algorithm transcend the limitations of static data, allowing it to adapt to the rapid information flow of specialized domains while ensuring factual consistency and high-fidelity reasoning.

\section*{Acknowledgements}
\subsection*{Funding}
This work was supported by the  Tianchi Talents - Young Doctor Program (5105250183m),  Science and Technology Program of Xinjiang Uyghur Autonomous Region (2024B03028, 2025B04051), Regional Fund of the National Natural Science Foundation of China (202512120005) . 

\section*{Author contributions statement}
Hui Ma: Writing – Review and Editing, Conceptualization; Shaoyu Dou: Writing – Original Draft, Software, Methodology; Ya Liu: Experimental Analysis; Fei Xing: Supervision; Li Feng: Validation, Data curation; Feng Pi: Supervision. All authors reviewed the manuscript. 

\section*{Data Availability}
The datasets used and/or analyzed during the current study available from the corresponding author on reasonable request.

\section*{Declaration of competing interest}
The authors declare that they have no known competing financial interests or personal relationships that could have appeared to influence the work reported in this paper.

\bibliography{refs}
\end{document}